\definecolor{my_green}{RGB}{132, 194, 174}
\definecolor{my_orange}{RGB}{229, 158, 130}
\definecolor{my_blue}{RGB}{143, 201, 226}
\theoremstyle{plain}
\newtheorem{theorem}{Theorem}[section]
\theoremstyle{definition}
\newtheorem{assumption}[theorem]{Assumption}
\theoremstyle{remark}
\title{Learning to Discuss Strategically: \\ A Case Study on \emph{One Night Ultimate Werewolf}}
\author{
    \textbf{Xuanfa Jin}\thanks{{{Equal contribution. Author ordering is determined by coin flip.}}}$^{~\,,1,3}$, 
    \textbf{Ziyan Wang}$^{*,2}$, 
    \textbf{Yali Du}$^{\dag,2}$, 
    \textbf{Meng Fang}$^{4}$, 
    \textbf{Haifeng Zhang}$^{\dag,1,3,5}$, 
    \textbf{Jun Wang}\thanks{Correspondence to: \textlangle yali.du@kcl.ac.uk\textrangle, \textlangle haifeng.zhang@ia.ac.cn\textrangle, \textlangle jun.wang@cs.ucl.ac.uk\textrangle.}$^{~\,,6}$ \\
    $^1$Institute of Automation, Chinese Academy of Sciences,\\
    $^2$ Cooperative AI Lab, Department of Informatics, King's College London, \\
    $^3$School of Artificial Intelligence, University of Chinese Academy of Sciences, \\
    $^4$University of Liverpool, 
    $^5$Nanjing Artificial Intelligence Research of IA, \\
    $^6$AI Centre, Department of Computer Science, UCL
}
\begin{document}

\maketitle

\begin{abstract}
    Communication is a fundamental aspect of human society, facilitating the exchange of information and beliefs among people. Despite the advancements in large language models (LLMs), recent agents built with these often neglect the control over discussion tactics, which are essential in communication scenarios and games. As a variant of the famous communication game Werewolf, \textit{One Night Ultimate Werewolf} (ONUW) requires players to develop strategic discussion policies due to the potential role changes that increase the uncertainty and complexity of the game. In this work, we first present the existence of the Perfect Bayesian Equilibria (PBEs) in two scenarios of the ONUW game: one with discussion and one without. The results showcase that the discussion greatly changes players' utilities by affecting their beliefs, emphasizing the significance of discussion tactics. Based on the insights obtained from the analyses, we propose an RL-instructed language agent framework, where a discussion policy trained by reinforcement learning (RL) is employed to determine appropriate discussion tactics to adopt. Our experimental results on several ONUW game settings demonstrate the effectiveness and generalizability of our proposed framework. The project page of our paper: \textcolor{magenta}{\href{https://one-night-ultimate-werewolf.github.io}{one-night-ultimate-werewolf.github.io}}.
\end{abstract}

\section{Introduction}
Many games such as StarCraft \citep{vinyals2017starcraft, vinyals2019grandmaster}, Diplomacy \citep{meta2022human, kramar2022negotiation} can approximate various fundamental issues in real life, and studying these games contributes to a better understanding of the functioning of our society \citep{park2022social}. Designing artificial intelligence (AI) agents that can play these games well has attracted a lot of attention \citep{brown2019superhuman, vinyals2019grandmaster, meta2022human}. Fortunately, recent large language models (LLMs) have demonstrated significant potential in constructing intelligent agents in numerous tasks \citep{yao2022webshop, park2023generative, liu2023agentbench, feng2024natural} due to their impressive reasoning and emergent generalization abilities \citep{wei2022emergent, achiam2023gpt, bubeck2023sparks}. Moreover, LLM-based agents have achieved approaching or even surpassing human performance in games like Chess\citep{feng2024chessgpt}, Minecraft \citep{zhu2023ghost, wang2023voyager, wang2023describe}, Avalon \citep{serrino2019finding, wang2023avalon,shi2023cooperation} and Werewolf \citep{xu2023exploring, xu2023language, wu2024enhance}, etc.

In the game Werewolf \citep{toriumi2017ai}, the hidden roles and uncertain discussions greatly influence the gameplay, making it challenging for players. Compared to it, the \textit{One Night Ultimate Werewolf} (ONUW) game adds roles that can change the roles of players, and all players only have one nighttime to take action and one daytime to discuss and vote. Therefore, the key challenge of the ONUW game is to deduce and distinguish the final roles of all players from their statements. Meanwhile, due to the sequential execution of actions at night, the information obtained by the former player might be not reliable, leading to the rise of reasoning difficulty and uncertainty. For example, Seer and Robber are two roles in the ONUW game with special abilities, and the Seer saw a player was a Werewolf at night. However, if the Robber switched that player's role later on, the information obtained by Seer becomes invalid. So players need to discuss strategically, leveraging others' statements and their own prior knowledge to guide other players to reveal their information, or conceal their identities by misleading or deceiving, particularly for Werewolves. Recent works \citep{xu2023exploring, xu2023language, wang2023avalon} have attempted to construct LLM-based agents that can play communication games, but they mainly focus on how to fully utilize reasoning and generalization abilities of LLMs, while neglecting the control over strategies.

In this work, we focus on enhancing the discussion ability of LLM-based agents by leveraging the ONUW game. We formulate the ONUW game as a Multi-Phase Extensive-Form Bayesian Game and explore various discussion tactics in the game. Through analyzing a three-player ONUW game with two Werewolves and one Robber, we present the existence of the Perfect Bayesian Equilibria (PBEs) \citep{fudenberg1991perfect} in two distinct scenarios: one with discussion and one without. Players' utilities at equilibria when in the scenario with discussion highlight the significance of discussion, as they are only determined by players' beliefs, which are influenced by discussion. Based on the insights obtained from the analyses, we thus propose an RL-instructed LLM-based agent framework. This framework leverages a policy optimized by reinforcement learning (RL) to determine an appropriate discussion tactic (\textit{e.g.}, "Honest Evidence", "Deceptive Accusation", etc.) based on current observation. To evaluate the effectiveness and generalizability of our framework, we conduct experiments in a three-player and a five-player ONUW game. The results indicate that the integration of our discussion policy can help LLM-based agents approximate PBEs more closely and improve the performance of LLM-based agents. Moreover, we observe the discussion policy trained by RL performs better than that by directly prompting LLM.

Our contributions can be summarized as follows: Firstly, we formulate the ONUW game as a Multi-Phase Extensive-Form Bayesian Game and provide theoretical analyses of a three-player ONUW game, demonstrating that players' utilities are only determined by their beliefs, which reveals the pivotal role that discussion plays in the ONUW game.  
Secondly, we develop an environment of the ONUW game, which is complicated due to the uncertainty caused by role changes. And we additionally contribute a dataset featuring players employing various discussion tactics in the ONUW game.
Finally, inspired by the importance of discussion shown in our analyses, we propose an RL-instructed language agent framework, where an instructive discussion policy trained by RL is integrated, and a belief modeling method is employed to deduce the roles of players and generate actions based on its beliefs and discussion tactics.

\begin{figure*}[t]
\begin{center}
    \centerline{\includegraphics[width=0.9\linewidth]{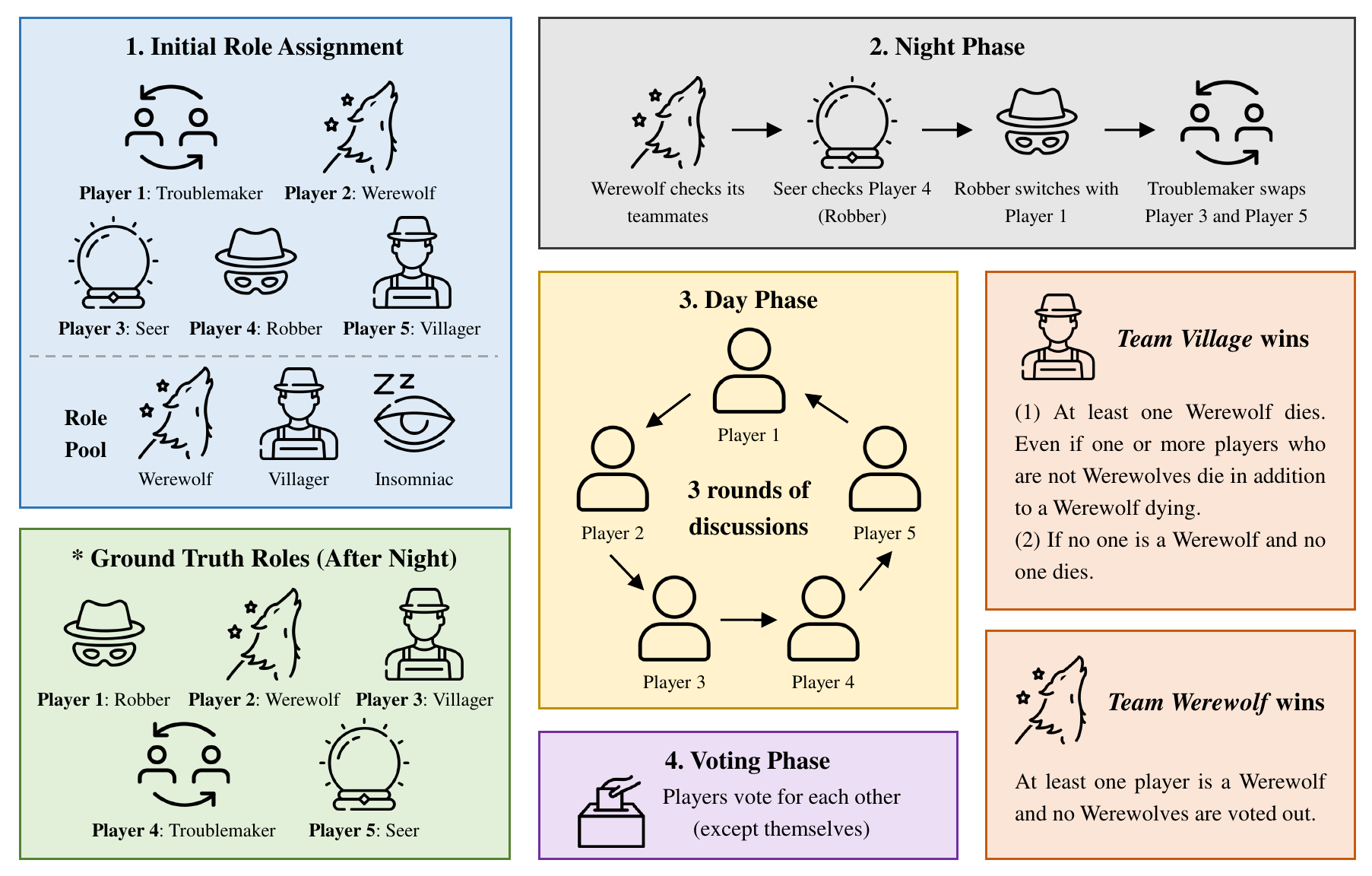}}
    \caption{The game process of the ONUW game. Initially, roles are randomly dealt to players. Then three phases: Night (abilities performed in order), Day (discussion in three rounds), and Voting (suspicious player voted out) proceed sequentially. The winner is decided by the voting result.}
    \label{fig:game_process}
\end{center}
\vspace{-1.0cm}
\end{figure*}

\section{Related Work}
This section reviews literature in the following areas: the analyses of Mafia/Werewolf games and the design of AI agents for communication games. Additional related work is covered in \cref{appendix:review}.

Mafia is a multi-player communication game, modeling a conflict between the mafias and civilians. Early research \citep{braverman2008mafia, yao2008theoretical, migdal2010mathematical} has attempted to identify the optimal strategies for both citizens and mafias theoretically and thereby determine what setups lead to fair games.
However, these works are based on strong assumptions such as random lynchings or meaningless debate, and restricted to simple scenarios like without detectives or removing most discussion. Werewolf is a reinvention of the Mafia game by introducing additional roles with special abilities. An \(\varepsilon\)-Nash equilibrium is found under the limitation of both villager-side and werewolf-side strategies \citep{bi2016human}. In contrast, our analyses make few assumptions and strictly adhere to game rules, despite being on specific cases. Some researchers have attempted to develop Werewolf agents that rely on rule-based systems or talking templates \citep{osawa2014designing, wang2018application}, while some rely on language models directly trained with Werewolf logs \citep{shibata2023playing}. As for the \textit{One Night Ultimate Werewolf} game, \citet{eger2019study} builds AI agents that can choose statements from a fixed set and designs a system that allows human players to play games with their AI agents. 

Recently, researchers have tried to explore the potential of LLMs playing the Werewolf game \citep{xu2023exploring, xu2023language, wu2024enhance}. \citet{xu2023language} utilizes a policy to select candidate actions generated by LLMs, which can be seen as an adjustment for the action distribution of the LLM-based agent. \citet{wu2024enhance} applies RL to train a \textit{Thinker} module for complex logical analysis and strategic planning using structured language features, aiming to improve the \textit{System-2} reasoning ability of agents. In contrast, our work directly integrates a discussion policy trained by RL into the thinking and decision-making process of LLM-based agents, focusing on enhancing agents' strategic discussion (\textit{e.g.}, honesty or deception) ability, which is an aspect that was not prominently addressed in previous work.

Besides the Werewolf games mentioned above, there are also AI researches conducted on other communication games, such as Diplomacy \citep{meta2022human, kramar2022negotiation}, Avalon \citep{serrino2019finding, wang2023avalon, shi2023cooperation} and SpyFall \citep{kim2023generative, qiao2023gameeval}. There has been a long history of studying Diplomacy in the realm of AI \citep{kraus1988diplomat, hall1992thoughts}, but most of them are rule-based algorithms until recently. Cicero \citep{meta2022human} integrates a dialogue module with a planning module trained by RL to infer players' intentions and generate dialogue in pursuit of its plans, finally achieving human-level performance in Diplomacy. DeepRole \citep{serrino2019finding} combines counterfactual regret minimization (CFR) with value networks trained through self-play, while introducing deductive reasoning techniques aimed at deducing actions within partially observable scenarios. \citet{kim2023generative} demonstrates the LLMs' potential in playing a famous mafia-style game, SpyFall, by using prompt engineering techniques. Compared to our work, most designs of these agents are inadequate for handling the complex languages in the ONUW game and lack control over discussion tactics.

\section{One Night Ultimate Werewolf Benchmark}
\textit{One Night Ultimate Werewolf} (ONUW) is a variant of the social game Werewolf \citep{toriumi2017ai}. In this game, players only have one night to use their abilities, followed by one day to discuss and win for their respective teams. The main challenges of this game are the role switches and potential deceptions that create uncertainty and confusion for all players. Initially, one role is dealt randomly to each player. There are three phases in the ONUW game: Night (abilities performed in order), Day (open discussion), and Voting (suspicious player voted out). Game rules are detailed in \cref{appendix:game_rule}.

\subsection{Problem Formulation}
Extensive-Form Game (EFG) \citep{ritzberger2016theory, heinrich2015fictitious} is a type of game where players take turns to make decisions in a specific order. However, compared with classic EFG, there are three phases in the ONUW game, and each player possesses diverse action spaces that differ across these phases. Also, the private actions and hidden roles indicate players have incomplete information, which is usually modeled as a Bayesian Game (BG) \citep{dekel2004learning, huang2019dynamic} or Hidden-Role Game (HRG) \citep{carminati2023hidden}.

In order to formulate this game accurately, we consider a \textbf{Multi-Phase Extensive-Form Bayesian Game (MP-EFBG)}, whose representation is based on a game tree. An \(m\)-phase \(n\)-player MP-EFBG can be formalized as a tuple \(\left.(\mathcal{N}, \Psi, \mathcal{S}, \{\Theta^i\}_{i \in \mathcal{N}}, \{\mathcal{H}^i\}_{i \in \mathcal{N}}, \{I^i\}_{i \in \mathcal{N}}, \{\mathcal{A}^i_{\psi}\}_{i \in \mathcal{N}, \psi \in \Psi}, P, \Omega, R)\right.\), where \(\mathcal{N} = \{1, 2, \dots, n\}\) is the set of players, and \(\Psi = \{1, 2, \dots, m\}\) is the set of phases. \(\mathcal{S}\) denotes the set of states corresponding to nodes in the game tree. For each player \(i\), \(\Theta^i\) is the set of types and \(\Theta = \times_{i=1}^n \Theta^i\) denotes the set of joint types of all players. To capture the private information, we define \(\mathcal{H}^i\) as the set of information states and \(I^i: \mathcal{S} \to \mathcal{H}^i\) as the information function that determines which states are indistinguishable for player \(i\) by mapping them on the same information state. \(\mathcal{A}^i_{\psi}(h^i)\) denotes the set of actions available to player \(i\) at information state \(h^i \in \mathcal{H}^i\) in phase \(\psi \in \Psi\). So if two states \(s_1, s_2 \in \mathcal{S}\) are mapped to the same information state by player \(i\) (\(I^i(s_1) = I^i(s_2)\)), player \(i\) will have the same action sets at these states. 
In this paper, we assume that all players have perfect recall, \textit{i.e.}, each player's current information state \(h^i_k\) implies knowledge of the sequence of its previous information states and actions, \((h^i_0, a^i_0, h^i_1, a^i_1, \dots, h^i_k)\), that led to current information state. \(P: \mathcal{S} \to \mathcal{N}\) is the player function, determining whom to act at the given state, and \(\Omega: \mathcal{S} \to \Psi\) is the phase function that decides what phase it is at the given state. Finally, \(R: \Theta \times \mathcal{S} \to \mathbb{R}^n\) is the utility function for all players according to their types and the terminal states (\textit{i.e.}, leaf nodes in the game tree).

\textbf{Beliefs.} Since each player has incomplete information about other players' types and even its own type (player's role may be changed in the game), player \(i\) will form a belief \(b^i: \mathcal{H}^i \to \Delta(\Theta)\) on all players' types based on its observation. We define player \(i\)'s belief on information state \(h^i \in \mathcal{H}^i\) as:
\begin{equation} \label{eq:belief}
    b^i(\theta | h^i) \overset{\text{def}}{=} \frac{p^i(\theta) p^i(h^i | \theta)}{\sum_{\theta^{\prime} \in \Theta} p^i(\theta^{\prime}) p^i(h^i | \theta^{\prime})}
\end{equation}
where \(p^i(\theta)\) is the prior probability of all players' types from player \(i\)'s view, and \(p^i(h^i | \theta)\) is the probability that player \(i\) observes \(h^i\) given joint types \(\theta\). Let \(\Delta h^i_k = h^i_{k+1} \backslash h^i_k\) denote the new information contained in \(h^i_{k+1}\) at step \(k+1\) compared to \(h^i_k\), then player \(i\)'s belief can be updated via Bayes' rule:
\begin{equation} \label{eq:belief_update}
    b^i_{k+1}(\theta | h^i_{k+1}) = \frac{b^i_k(\theta | h^i_k) p^i(\Delta h^i_k | \theta, h^i_k)}{\sum_{\theta^{\prime} \in \Theta} b^i_k(\theta^{\prime} | h^i_k) p^i(\Delta h^i_k | \theta^{\prime}, h^i_k)}
\end{equation}
And \(b^i_0(\theta | h^i_0) = b^i_0(\theta)\) is set as a uniform distribution, which is an unbiased estimate if there is no prior information.

\textbf{Behavioral Strategies.} Player \(i\)'s behavioral strategy in phase \(\psi \in \Psi\) is defined as \(\pi^i_{\psi}(h^i) \in \Delta(\mathcal{A}^i_{\psi}(h^i)), \forall h^i \in \mathcal{H}^i\), which is a probability distribution over available actions given a phase and an information state. And \(\pi^i_{\psi}(a^i | h^i)\) denotes the probability that player \(i\) takes action \(a^i\) in phase \(\psi\) given information state \(h^i\). Player \(i\)'s strategy consists of belief modeling and action selection, and is expressed as
\begin{equation} \label{eq:policy}
    \pi^i_{\psi}(a^i | h^i) = \sum_{\theta \in \Theta} b^i(\theta | h^i) \tilde{\pi}^i_{\psi}(a^i | h^i, \theta)
\end{equation}
where \(\tilde{\pi}^i_{\psi}\) is a belief-conditioned behavioral strategy for player \(i\) in phase \(\psi\). Denote \(\bm{\pi} = (\pi^1, \dots, \pi^n)\) as a collection of strategies of all players, and the expected utility for player \(i\) induced by strategy profile \(\bm{\pi}\) as \(\mathbb{E}_{\bm{\pi}}\left[R^i\right]\).

\begin{figure*}[t]
\begin{center}
    \centerline{\includegraphics[width=0.8\linewidth]{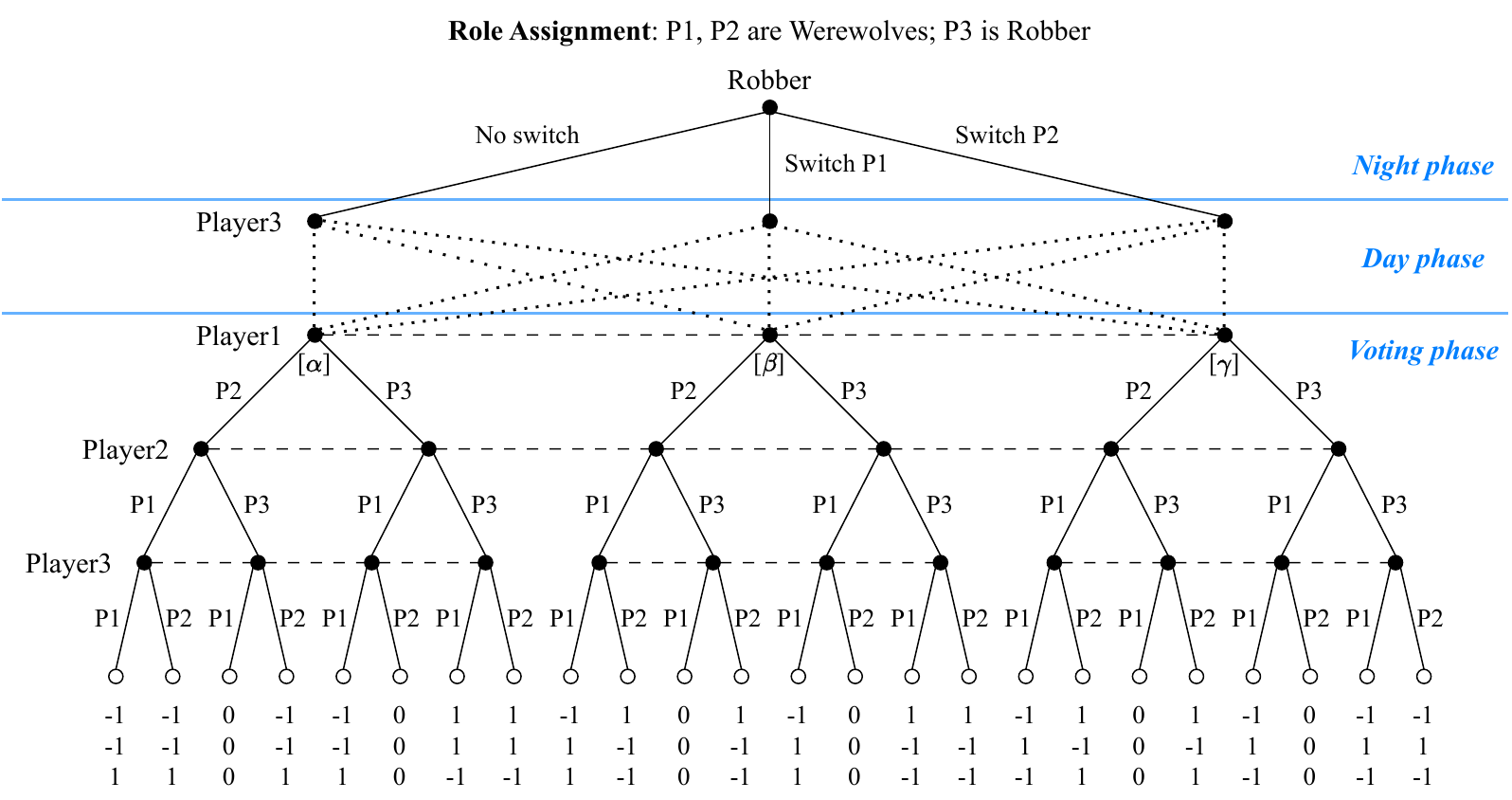}}
    \vspace{-0.3cm}
    \caption{Game tree of the game with discussion. P1, P2, and P3 represent Player 1, Player 2, and Player 3, respectively. The dot lines in the Day phase represent Player 3's potential speeches. Those decision nodes on the same dash lines are in the same information sets for corresponding players. The utilities on leaf nodes are organized by the index of players.}
    \label{fig:game_tree_2}
\end{center}
\vspace{-1.0cm}
\end{figure*}

\section{Analyses on Three-Player ONUW} \label{sec:analysis}
In this section, we consider a simple but interesting version of the ONUW game: with two Werewolves and one Robber. We present certain Perfect Bayesian Equilibria \citep{fudenberg1991perfect} in this version of the game under two different assumptions, which demonstrate the crucial role that discussion during the Day phase plays in the ONUW game.

\subsection{Notation}
Since the ONUW game is an extensive-form game with incomplete information, an appropriate solution concept is Perfect Bayesian Equilibrium (PBE) \((\bm{\pi}, \bm{b})\): a strategy profile \(\bm{\pi}\) and a belief system \(\bm{b}\) satisfying \textit{Belief Consistency} and \textit{Sequential Rationality}. \textit{Belief Consistency} includes both on and off-equilibrium paths. Specifically, beliefs on every path reached in equilibrium with a positive probability (on-equilibrium path) should be updated according to \cref{eq:belief_update}; on paths of zero probability (off-equilibrium path), the beliefs can be updated arbitrarily. For any player \(i\) at given information state \(h^i_k\), \textit{Sequential Rationality} imposes the condition for strategy \(\pi^i\): \(\pi^i \in \arg\max_{{\pi^i}^{\prime}} \mathbb{E}_{{\pi^i}^{\prime} \pi^{-i}}\left[R^i | h^i_k, \bm{b}\right]\), which means given belief system and other players' subsequent strategies, player \(i\)'s strategy must be optimal.

Without loss of generality, we initially assign Player 1 and Player 2 as Werewolves, and Player 3 as Robber. There is only one Robber in the game which is known to all players. 
Denote Player 3's night behavioral strategy as Robber by \(\pi^3_\text{R}\), and its voting strategies based on its night action as \(\pi^3_\text{NS}\) (No switch), \(\pi^3_\text{S1}\) (Switch with Player 1) and \(\pi^3_\text{S2}\) (Switch with Player 2). Player 1 and Player 2 only have behavioral strategies in the Voting phase, denoting as \(\pi^1, \pi^2\). Their beliefs in the Voting phase are \(b^1, b^2\), and Player 3's beliefs at each information set are \(b^3_\text{NS}, b^3_\text{S1}, b^3_\text{S2}\). Belief probability orders match decision node orders in the game trees (\textit{e.g.}, \cref{fig:game_tree_2}) from left to right. After the game, player \(i\)'s utility is \(R^i = 1\) for winning, \(R^i = -1\) for losing, \(R^i = 0\) for drawing based on its final role.

\subsection{Game without Discussion} \label{subsec:w.o.discussion}
First, we consider a scenario that discussion is not allowed in the Day phase, and players vote directly based on the private information they captured during the Night phase. \cref{fig:game_tree_1} shows the game tree in this case. At the beginning of the night, the Werewolves would recognize each other and thereby know the player left must be the Robber. However, based on the game rules, Robber always takes action after the Werewolves, which forms a threat to prevent them from directly voting for the Robber. Also, the Robber has no prior knowledge of two Werewolves, thus tends to switch with any of them with equal probability. The game's solution is PBEs expressed in the theorem below.

\begin{theorem} \label{theorem:equilibrium_1}
    For the ONUW game with two Werewolves and one Robber, in the case where discussion is not allowed, there exist PBEs \((\bm{\pi}^*, \bm{b}^*)\): the Robber switches with any Werewolves with a probability of \(1/2\) and votes for the player it switches with; the two Werewolves directly vote for each other. Each player's belief in \(\bm{b}^*\) is consistent with other players' strategies. And the expected utilities of all players in the equilibria are:
    \begin{equation}
        \mathbb{E}_{\bm{\pi}^*}\left[R^1\right] = \mathbb{E}_{\bm{\pi}^*}\left[R^2\right] = 0,\ \mathbb{E}_{\bm{\pi}^*}\left[R^3\right] = 1
    \end{equation}
\end{theorem}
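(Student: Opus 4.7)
The plan is to verify the two defining conditions of a Perfect Bayesian Equilibrium---Belief Consistency and Sequential Rationality---for the proposed profile \((\bm{\pi}^*, \bm{b}^*)\), and then to compute the expected utilities by a short case split on the Robber's realized Night action.

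First, I would lay out the information structure on the game tree of the no-discussion scenario. At Night, Players 1 and 2 observe each other's Werewolf role and therefore know Player 3's original card is Robber; they do not, however, observe the Robber's choice. Hence each of their Voting-phase information sets bundles the three nodes corresponding to Player 3's Night action (no switch, switch with P1, switch with P2), while Player 3 reaches three distinct singleton information sets, one per Night action. Given Player 3's prescribed mixture that puts probability \(1/2\) on each of the two switches, Belief Consistency on the equilibrium path follows immediately from \cref{eq:belief_update}: Player 1's posterior places probability \(1/2\) on the event that its own final card is Robber and \(1/2\) on the event that its card is Werewolf while Player 2's is Robber, and symmetrically for Player 2; Player 3's singleton beliefs are degenerate. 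The no-switch node lies off-path, so its belief may be assigned freely, and I would pick it so as to block Voting deviations.

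Next, I would verify Sequential Rationality by backward induction. At the Voting stage, for Player 1 (and symmetrically Player 2) I would compute, under the above belief, the expected utility of each feasible vote against the opponents' prescribed votes: voting for Player 2 yields \(0\) because P1 wins exactly in the branch where Player 3 switched with Player 2, while any unilateral deviation either preserves the terminal outcome or creates a three-way vote tie whose resolution under the game's tiebreaking rule does not strictly improve P1's expected payoff. For Player 3 at each of its three information sets, against the Werewolves' mutual-vote rule one checks directly that Player 3 achieves utility \(1\) under its prescribed Vote: in the two switch branches the player who ends holding the Robber card is eliminated with two votes while Player 3's final card is Werewolf on the winning side, and on the off-path no-switch node Player 3 still wins as a surviving Villager whenever either Werewolf is voted out. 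Rolling the induction back to Night, every Night action leaves Player 3 with utility \(1\), so the \(1/2\), \(1/2\) switching mixture is itself a best response.

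Finally, the expected utilities follow by aggregating over the two equilibrium-path branches: in each, exactly one of the original Werewolves has been handed the Robber card and is voted out, while the other wins together with Player 3, giving \(\mathbb{E}_{\bm{\pi}^*}[R^1] = \mathbb{E}_{\bm{\pi}^*}[R^2] = 0\) and \(\mathbb{E}_{\bm{\pi}^*}[R^3] = 1\). The main obstacle I foresee is not the belief update but the careful check that no unilateral Voting deviation by Player 1 or Player 2 is strictly profitable; this requires pinning down both the off-path belief at the no-switch node and the tiebreaking convention for three-way ties, both of which must be resolved by reference to the ONUW environment specified in \cref{appendix:game_rule}.
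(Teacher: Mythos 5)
Your proposal is correct and, in substance, runs the same checks as the paper's proof: Bayes-consistent beliefs at the on-path information sets, information-set-by-information-set best-response verification in the Voting phase, indifference of the Robber across all three Night actions (each yields utility \(1\) against the Werewolves' mutual votes), and the final utility computation by splitting on the two equilibrium-path branches. The one structural difference is that the paper does not merely verify the stated profile: it first parametrizes a symmetric family \(\pi^3_\text{R}=(1-2s,s,s)\), \(\pi^1=\pi^2=(1-q,q)\), computes the expected utilities in closed form, and uses the first-order condition \((1-2s)(2q+1)=0\) to show that \(s=1/2\) is \emph{forced} in any Nash equilibrium of this family before checking sequential rationality (which then pins down \(q=0\)); your pure-verification route establishes existence, which is all the theorem claims, but does not exhibit why the \(1/2\)--\(1/2\) Night mixture is the one that sustains the Werewolves' mutual voting. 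Two small points to tidy up. First, in the paper's game tree the Voting phase is simultaneous, so Player 3's three Voting information sets are not singletons --- each contains the four nodes indexed by the Werewolves' vote combinations (hence the four-component beliefs \(b^3_\text{NS},b^3_\text{S1},b^3_\text{S2}\)), and Player 2's information set contains six nodes; this does not change any best-response computation, but your description of the tree should be made consistent. Second, the off-path belief at the no-switch information set need not be chosen adversarially to block deviations: the paper simply takes the belief induced by the Werewolves' strategies, \(b^3_\text{NS}=((1-q)^2,(1-q)q,(1-q)q,q^2)=(1,0,0,0)\), under which Player 3 is indifferent between its two votes and still earns utility \(1\), so no special construction is needed there, and the tie-breaking worry you raise is harmless because the three-way one-vote-each outcome never strictly improves a deviator's payoff under either reading of the rules.
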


The formal description and proof of \cref{theorem:equilibrium_1} can be seen in \cref{appendix:proof_1}. This theorem demonstrates that in PBEs of this case, the \textit{original Robber} (\textit{i.e.}, Player 3) always wins by randomly switching roles with the two other players to ensure its own safety and sow threats. Meanwhile, the probability of Player 1 and Player 2 winning is \(1/2\) each, depending on whether they are switched by Player 3. This is because Player 3 knows all players' final roles due to its unique ability to switch.

\subsection{Game with Discussion}
Now we consider a scenario where discussion is allowed in the Day phase. If all players ignore the discussion, the PBEs in \cref{theorem:equilibrium_1} will still hold. But a more common situation is that the beliefs held by two \textit{original Werewolves} about \textit{original Robber}'s night action would be influenced by Robber's speech, which would affect their voting strategies and result in different equilibria.

Based on \cref{assumption:homo}, we assume both \textit{original Werewolves} would form beliefs about \textit{original Robber}'s night action (No switch, Switch P1, Switch P2) with probabilities of \((\alpha, \beta, \gamma)\) after discussion, where \(\alpha + \beta + \gamma = 1\). \cref{fig:game_tree_2} shows the game tree in this case. Then PBEs during the Voting phase can be derived based on the beliefs above.

\begin{theorem} \label{theorem:equilibrium_2}
    For the ONUW game with two Werewolves and one Robber, in the case that both Werewolves form beliefs about the Robber's night action with probabilities of \((\alpha, \beta, \gamma)\) (\(\alpha \neq 0\)), there exist PBEs \((\bm{\pi}^*, \bm{b}^*)\) during the Voting phase: the Werewolves vote for Player 3 with a probability of \(q\) and each other with \(1 - q\), where \(q = (\beta + \gamma - \alpha)/2\alpha\); the Robber votes for Player 1 with a probability of \(p\) and Player 2 with \(1 - p\), where \(p = (\alpha^2 + \beta^2 - \gamma^2)/2\alpha^2\). Each player's belief in \(\bm{b}^*\) is consistent with other players' strategies. To ensure \(p\) and \(q\) are probabilities and the existence of the equilibria, there are constraints on the belief distribution (omitted for brevity). And under the constraints, the expected utilities of all players in these equilibria are:
    \begin{equation}
        \mathbb{E}_{\bm{\pi}^*}\left[R^1\right] = \delta (1 - 2\gamma),\ 
        \mathbb{E}_{\bm{\pi}^*}\left[R^2\right] = \delta (1 - 2\beta),\ 
        \label{eq:utility_3} \mathbb{E}_{\bm{\pi}^*}\left[R^3\right] = - \delta
    \end{equation}
    where \(\delta = 1/(4\alpha^2) - 1/(2\alpha) - 1\).
\end{theorem}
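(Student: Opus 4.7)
The plan is to construct the PBE by imposing belief consistency via Bayes' rule on the discussion-induced priors and by enforcing sequential rationality through the standard indifference conditions characteristic of a strictly mixed equilibrium. First I would lay out the beliefs at the Voting phase. By the homogeneous-belief assumption, both original Werewolves enter the Voting phase with the same atomic prior $(\alpha, \beta, \gamma)$ over the Robber's three night actions (NS, S1, S2); this translates directly into a belief over true type profiles, since NS corresponds to $(W,W,R)$, S1 to $(R,W,W)$, and S2 to $(W,R,W)$. The Robber sits in three distinct information sets indexed by its own night action, in each of which it knows the true profile exactly; in the S1 and S2 information sets the Robber is on the Werewolf team and has a strictly dominant pure action (vote the switched ex-Werewolf), so the only nontrivial mixing on P3's side is $\pi^3_\text{NS}$, for which the parameter $p$ applies.

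Second, I would enumerate voting outcomes under the candidate profile (each Werewolf votes P3 with probability $q$ and the co-Werewolf with $1-q$; Robber in NS mixes $(p, 1-p)$ between P1 and P2; Robber in S1/S2 votes the switched player with certainty). The joint distribution over who is lynched, read off the game tree in \cref{fig:game_tree_2} with the environment's tiebreaking rule applied consistently, yields an expected utility for each pure action as a quadratic in $(p, q)$ weighted by the relevant belief. I would then impose two indifference equations: Werewolf indifference between voting P3 and voting the co-Werewolf (linear in $p$ given $q$), and Robber-in-NS indifference between voting P1 and voting P2 (linear in $q$ given $p$). Solving this coupled system gives $q = (\beta + \gamma - \alpha)/(2\alpha)$ and $p = (\alpha^2 + \beta^2 - \gamma^2)/(2\alpha^2)$.

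Third, I would check feasibility. Requiring $p, q \in [0,1]$ carves out the region of $(\alpha, \beta, \gamma)$ on which the mixed equilibrium is genuine; on this region, belief consistency on the Night phase is automatic by taking $\pi^3_\text{R}$ to realize the posterior $(\alpha, \beta, \gamma)$ that discussion induces in the Werewolves. Substituting the equilibrium $(p, q)$ back into the expected utility expressions and simplifying then produces the claimed $\mathbb{E}_{\bm{\pi}^*}[R^1] = \delta(1 - 2\gamma)$, $\mathbb{E}_{\bm{\pi}^*}[R^2] = \delta(1 - 2\beta)$, and $\mathbb{E}_{\bm{\pi}^*}[R^3] = -\delta$ with $\delta = 1/(4\alpha^2) - 1/(2\alpha) - 1$. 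A useful sanity check at this stage is the P1--P2 symmetry of the setup: swapping the labels 1 and 2 together with $\beta \leftrightarrow \gamma$ should exchange the utility pair $(\mathbb{E}[R^1], \mathbb{E}[R^2])$, which the formulas indeed do.

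The main obstacle will be the bookkeeping for voting outcomes: with three voters, three candidates, and three possible true type profiles, the number of cases grows quickly, and whatever tiebreaking rule the ONUW environment uses must be applied consistently across all of them. I would lean on the P1--P2 symmetry to halve the casework and use the symmetry check above as insurance against algebraic slips; once the expected-utility table is assembled, the remaining linear algebra for $(p, q)$ and the substitution back into the utilities are routine.
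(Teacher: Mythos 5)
Your proposal is correct and follows essentially the same route as the paper's proof in \cref{appendix:proof_2}: fix the weakly dominant pure actions at the S1/S2 information sets, impose the mixed-strategy indifference conditions on the Werewolves and on the Robber at the no-switch information set (the paper phrases these as first-order conditions on the multilinear expected utilities, which is equivalent), solve for \(p\) and \(q\), derive the feasibility constraints, construct Bayes-consistent beliefs, verify sequential rationality at each information set, and substitute back to obtain the utilities. The only cosmetic difference is that the paper initially allows the two Werewolves distinct voting probabilities \(q_1, q_2\) and derives \(q_1 = q_2\) from the Robber's indifference condition \(\alpha(q_1 - q_2) = 0\) with \(\alpha \neq 0\), whereas you posit the symmetric profile directly, which suffices for the existence claim.
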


The formal description and proof of \cref{theorem:equilibrium_2} can be seen in \cref{appendix:proof_2}. Interestingly, Player 3's utility is only determined by the probability of the Werewolves believing it did not switch with anyone. Therefore, if Player 3 wants to maximize its utility through discussion while maintaining the equilibria, it has to convince the Werewolves that it did not switch with a probability of \(1/2\), no matter what action it took. In this way, Player 3's expected utility achieves the maximum 1.

\section{Learning to Discuss Strategically} \label{sec:framework}
In \cref{sec:analysis}, we demonstrate that it is essential for players to affect other players' beliefs through discussion. As a result, we capture knowledge about how to discuss in a latent variable \(Z\) (called discussion tactic), on which we condition the behavioral strategy with belief as \(\tilde{\pi}^i_{\psi}(a | h, \theta, z)\), to explicitly adjust player's discussion preference in their strategy. Players decide their discussion tactic after being given their own information and derived beliefs. Hence, when introducing the discussion tactic \(z \in Z\), the behavioral strategy of player \(i\) (\textit{i.e.}, \cref{eq:policy}) can be rewritten as:
\begin{equation} \label{eq:policy_latent}
    \pi^i_{\psi}(a|h) = \sum_{\theta \in \Theta} b^i(\theta|h) \sum_{z \in Z} \mu^i(z|h, \theta) \tilde{\pi}^i_{\psi}(a|h, \theta, z)
\end{equation}
where \(\mu^i\) is denoted as the discussion policy of player \(i\), deciding what specific discussion tactic \(z\) to choose given information and belief. In the following, we describe our method for learning the discussion policy and how it is applied to construct an LLM-based agent to play the ONUW game.

\subsection{Learning Discussion Policy by RL}
Considering the remarkable reasoning and human-like text generation abilities of LLMs, we adopt LLMs as the belief function \(b^i\) and the belief-conditioned behavioral strategy \(\tilde{\pi}^i_{\psi}\) of player \(i\). As a consequence, we only need to learn the discussion policy \(\mu^i\) to obtain a better behavioral strategy for player \(i\) according to \cref{eq:policy_latent}. Since the objective of learning policy \(\mu^i\) is to maximize the expected payoffs of player \(i\), we can optimize it in an RL manner if we treat other players' public actions (which are contained in player \(i\)'s information state \(h^i\)) and player \(i\)'s derived belief \(\theta\) as its observation and the chosen discussion tactic \(z^i\) as its action. Meanwhile, since players in the ONUW game only receive their rewards (or utilities) at the end of the game, the reward function at each step can be defined as follows: if game is over at the next step, then \(r^i(h^i, \theta; z^i) = R^i(\theta^*, s)\), where \(\theta^*\) is the ground truth of the role assignment; otherwise, \(r^i(h^i, \theta; z^i) = 0\).

\textbf{Discretization of Discussion Tactic.} Due to the complexity of languages and semantics, there are almost unlimited types of discussion tactics. This makes it difficult to learn the policy \(\mu^i\) and understand the chosen discussion tactic \(z\). Inspired by prior research on argumentation \citep{lai2022werewolf, carlile2018give} and the discussion characteristics of the ONUW game, we classify the discussion tactics frequently adopted during the game into three major categories:

\vspace{-6pt}
\begin{itemize}[leftmargin=15pt, itemsep=3pt]
    \item \textbf{Evidence}: \textit{Provide some game-related evidence or information}. It is a widely used and main tactic in communication games for players.
    \item \textbf{Accusation}: \textit{Accuse someone has a specific role or action}. Accusation is a generic tactic in games with hidden roles, which can force the accused player to reveal information to defend itself.
    \item \textbf{Defense}: \textit{Defend yourself or someone else against an accusation}. Accompanied by accusation, there is defense. Player's defense can also be seen as a "reverse accusation" back to the accuser.
\end{itemize}
\vspace{-6pt}

For each category, we further divide it into \textit{honest} and \textit{deceptive} ones (\textit{e.g.}, "Honest Evidence" and "Deceptive Evidence"). Here, "honest" refers to being consistent with the information or beliefs that the speaker knows, while "deceptive" means the opposite. Hence, we get a total of six discussion tactics, which can be considered as an intuitive discretization of the discussion tactic space \(Z\), enabling us to analyze the discussion preference of players explicitly.

\textbf{Optimizing Discussion Policy with RL.} Although LLMs can serve as the discussion policy \(\mu^i\) for each player, the decision-making ability of LLMs on specific tasks tends to be affected by their prior knowledge of other tasks. In order to enhance their performance, we implement reinforcement learning to optimize a task-specific discussion policy for the ONUW game. Since there is no extraneous game information, such as players' personalities or body language, that could interfere with gameplay, it is believed that there exists an optimal policy \(\mu\) that is invariant across different players. Therefore, we decide to train a general discussion policy \(\mu\) that adapts to various situations.

In contrast to traditional RL tasks, the state for policy \(\mu\) is the discussion history \(h\) and derived belief on all players' roles \(\theta\), which are both presented in natural language. To combine these into a single, fixed-length input, we concatenate the history and belief and convert them into state embeddings using LLMs: \(s = \texttt{LLMEmbedding}(h, \theta)\), where we consider all necessary information is extracted. The state embedding \(s\) is then passed through the discussion policy \(\mu\), and the output is the chosen discussion tactic \(z\). However, because of the slow interaction with LLMs, it is almost impossible to optimize the discussion policy \(\mu\) using online RL methods. So we turn to offline RL methods that support optimizations on discrete action spaces.

Given the scarcity of datasets containing human players engaged in the ONUW game, we opt to leverage game logs generated by LLMs, which is the most effective way to collect trajectories for offline RL training. We first extract the trajectories separately from each player's perspective from the game logs. For each transition in these trajectories, the discussion history \(h\) that is visible to the current player and the derived belief \(\theta\) are converted to the current state embeddings \(s\) by LLMs, so does the next state embeddings \(s^{\prime}\). Then these transitions are gathered to create a dataset \(\mathcal{D} = \left\{\left(s, z, r, s^{\prime}\right)\right\}\) for the following training. We adopt Conservative Q-Learning (CQL) \citep{kumar2020conservative} to train the discussion policy \(\mu\). Let \(Q_\phi(s, z)\) denote the Q-function of policy \(\mu\), parameterized by \(\phi\), the loss function can be written as
\begin{equation} \label{eq:loss}
    \mathcal{L}(\phi) = \rho \mathbb{E}_{s \sim \mathcal{D}} \left[\log \sum_{z} \exp{Q_\phi(s, z)} - \mathbb{E}_{z \sim \mathcal{D}}\left[Q_\phi(s,z)\right]\right] + \frac{1}{2} \mathbb{E}_{s, z, s^{\prime} \sim \mathcal{D}} \left[\left(Q_\phi - \mathcal{B}Q_\phi\right)^2\right]
\end{equation}
where \(\rho\) is a trade-off factor, and \(\mathcal{B}\) is the Bellman operator.

\begin{figure*}[t]
\begin{center}
    \centerline{\includegraphics[width=\linewidth]{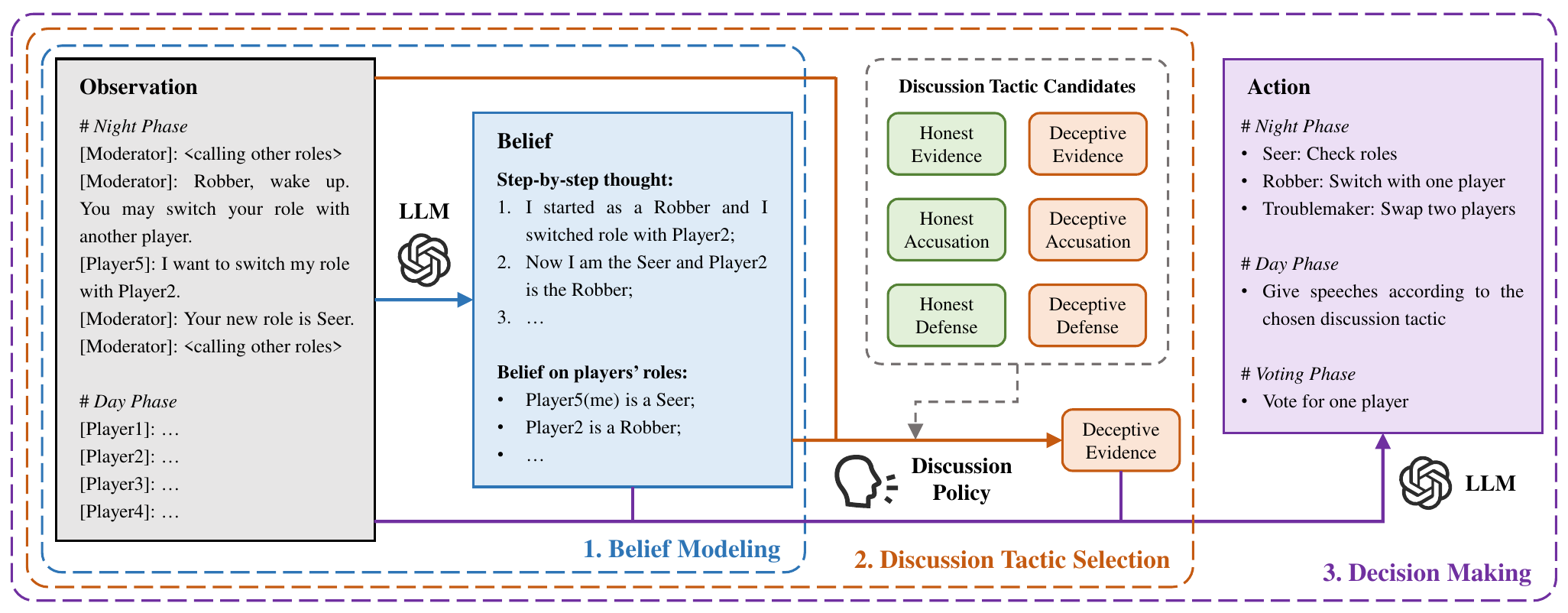}}
    \vspace{-0.3cm}
    \caption{Overview of the RL-instructed LLM-based agent framework. (1) Belief Modeling: form beliefs on players' roles based on the observation. (2) Discussion Tactic Selection: utilize a discussion policy trained by RL to choose a discussion tactic from the candidates. (3) Decision Making: take action based on the observation (also belief and discussion tactic, according to the game phase).}
    \label{fig:framework}
\end{center}
\vspace{-1.0cm}
\end{figure*}

\subsection{RL-instructed LLM-based Agent Framework}
Here we demonstrate how to utilize the trained discussion policy to construct an LLM-based agent to play the ONUW game. An overview of the framework is shown in \cref{fig:framework}. There are three key components: (1) \textit{Belief Modeling}; (2) \textit{Discussion Tactic Selection}; and (3) \textit{Decision Making}. Different combinations of the components are used to generate actions in different phases. 
For the Night phase (Component 3), since there is no additional information to refer to, players would take actions based solely on their observations and prior information. For the Day phase (Component 1, 2, 3), all components combine to generate a public speech that adheres to the discussion tactic chosen by the trained policy. And for the Voting phase (Component 1, 3), players would first form beliefs and then vote for one player based on their observations and beliefs (mostly).

In Component 1 and 3, where LLM is used to form beliefs and generate actions, we design the full prompts as follows: for \textit{Belief Modeling} (Component 1), the system prompt consists of an explanation that LLM is playing the ONUW game, the rules of the game, all possible roles and their abilities, detailed description of its role, and the desired response format; the user prompt includes observation of the player that LLM is, and instructions about how to form beliefs. For \textit{Decision Making} (Component 3), the system prompt adds additional prompts of the chosen discussion tactic; the user prompt includes observation, beliefs derived from the \textit{Belief Modeling} component, and instructions about how to act in the current phase. Our prompts are detailed in \cref{appendix:prompt}.

\begin{wrapfigure}{r}{0.5\textwidth}
    \vspace{-0.5cm}
    \begin{center}
        \includegraphics[width=0.95\linewidth]{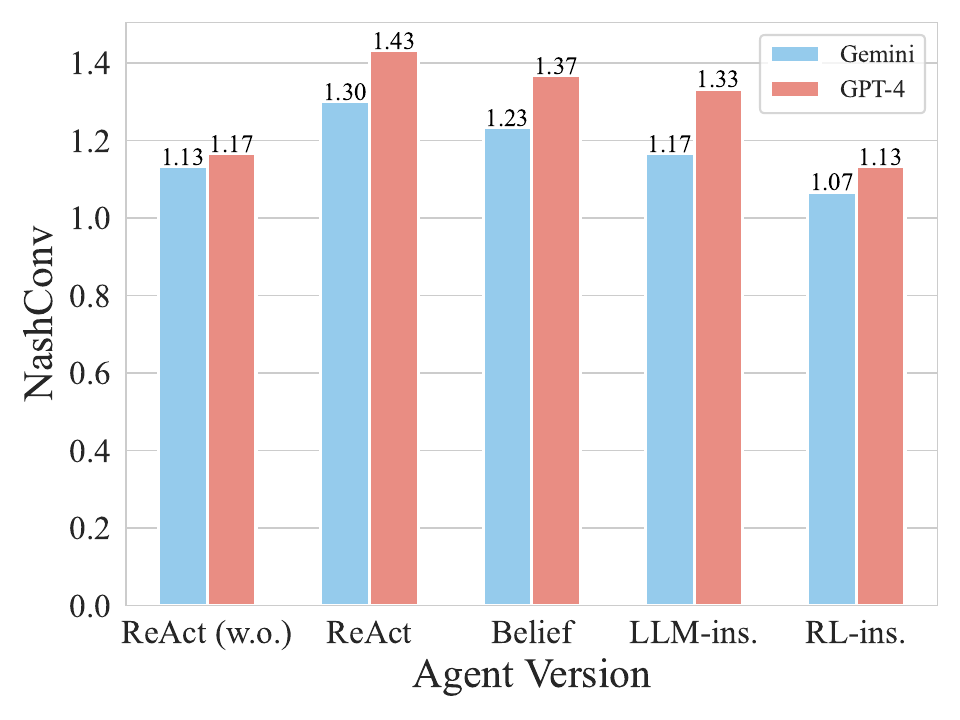}
        \vspace{-0.3cm}
        \caption{The NashConv value of different agents playing in the three-player ONUW game.}
        \label{fig:nashconv}
    \end{center}
    \vspace{-0.8cm}
\end{wrapfigure}

\section{Experiments}
In this section, we conduct three experiments from different aspects to evaluate the performance of our proposed RL-instructed LLM-based agent framework. We first perform experiments on the three-player ONUW game discussed in \cref{sec:analysis}, to test whether LLM-based agents can recognize and approximate the equilibria. Then to showcase the scalability of our method, we consider a five-player ONUW game for experiments. We evaluate the effectiveness of our discussion policy trained by RL in settings where the initial role assignment and players' actions at night are predefined. Furthermore, we demonstrate the generalizability of our discussion policy in the standard setting to see whether it could adapt to any role in the game.

\subsection{Setup} \label{subsec:setup}
To develop the game environment, we employ a multi-agent language game framework called ChatArena \citep{ChatArena} and modify it to fit the ONUW game. We conduct our experiments on \texttt{gpt-4-1106-preview} \footnote{\url{https://platform.openai.com/docs/models} \label{openai}} (GPT-4) and \texttt{gemini-pro} \footnote{\url{https://ai.google.dev/models} \label{google}} (Gemini), where the temperature is all set to 1.0. Considering the state-of-the-art performance of GPT-4 \citep{zheng2023judging}, we leverage the game logs generated by it as the dataset for training the discussion policy. Detailed collection process and data statistics are in \cref{appendix:data}. \texttt{text-embedding-ada-002} \textsuperscript{\ref{openai}} is adopted to get the state embeddings and we utilize the CQL \citep{kumar2020conservative} to train the discussion policy. More training details can be referred to \cref{appendix:hyperparam}. We repeat the game 30 times and report the final results for each evaluation.

As for comparison, we implement the \textit{ReAct} \citep{yao2022react} agent as the baseline by directly prompting the LLM with raw observations to generate its reasoning and action. Besides, we design three other ablated versions of our LLM-based agents (\textit{RL-instructed}) to conduct evaluations. The first ablated version (\textit{Belief}) removes the discussion policy, which means the agent generates speech directly based on its observations and beliefs. The second version (\textit{Random}) adopts a random discussion policy. And the last version (\textit{LLM-instructed}) replaces the discussion policy with LLM, allowing LLM to determine the discussion tactic autonomously.

\subsection{Experiment on Three-Player ONUW} \label{subsec:cases}
The analyses in \cref{sec:analysis} theoretically illustrate the PBEs in the three-player ONUW game. Therefore, in this section, we investigate the potential of LLM-based agents to recognize and approximate the equilibria by calculating the NashConv value of agents when playing in the same game setting. The NashConv value is defined as \(\textsc{NashConv}(\bm{\pi}) = \sum_i \left[R^i(\textsc{BR}(\bm{\pi}^{-i}), \bm{\pi}^{-i}) - R^i(\bm{\pi})\right]\), where \textsc{BR} means the best response. This value represents how much utilities players can gain by deviating to their best responses, which also can be interpreted as a distance to equilibria.

\begin{wrapfigure}{r}{0.7\textwidth}
\vspace{-0.6cm}
\begin{center}
    \subfigure[\textit{Team Village}'s win rates in the \textit{easy} setting \label{subfig:easy_env}]{\includegraphics[width=0.43\linewidth]{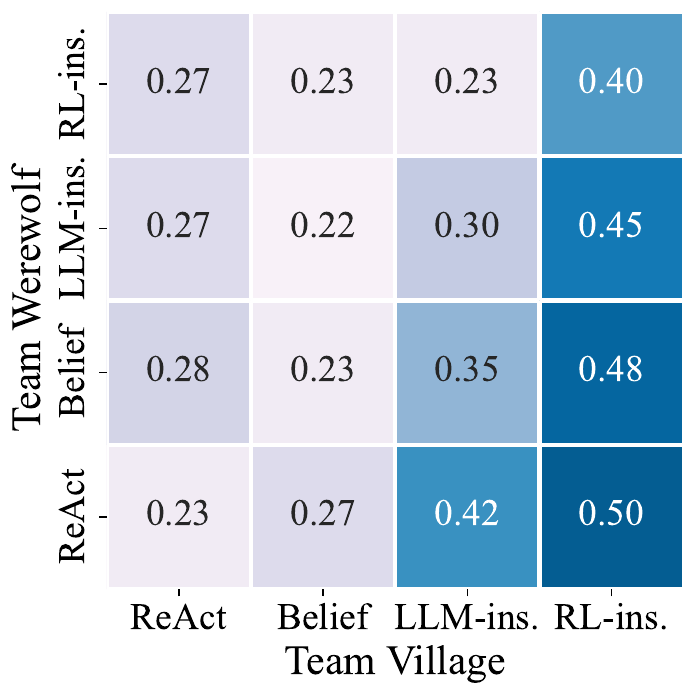}}
    \quad
    \subfigure[\textit{Team Village}'s win rates in the \textit{hard} setting \label{subfig:hard_env}]{\includegraphics[width=0.5\linewidth]{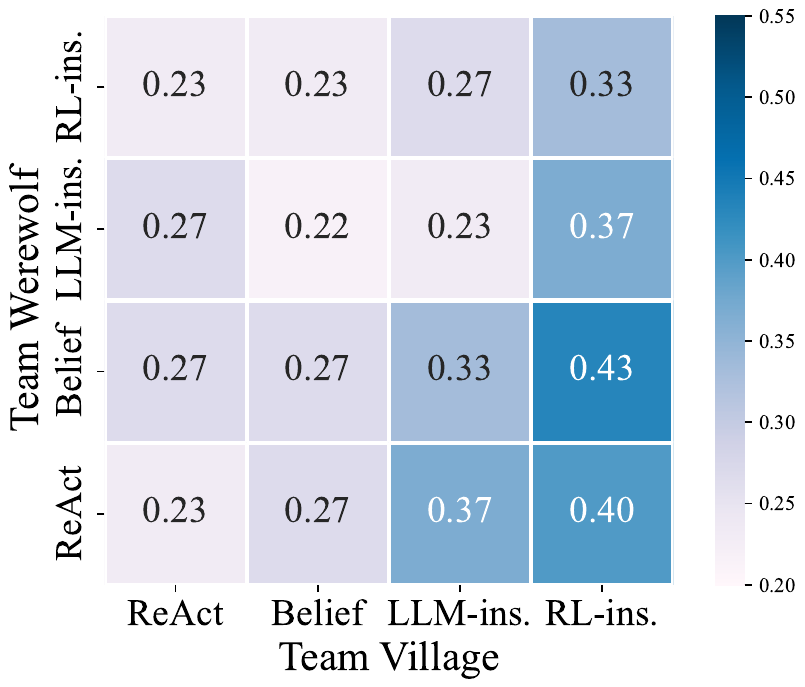}}
    \vskip -0.05in
    \caption{The matrices of \textit{Team Village}'s win rates in different settings. It is clear that the hard setting weakens the advantages of our agent.}
    \label{fig:fixed_env}
\end{center}
\vspace{-0.6cm}
\end{wrapfigure}

We apply all four versions of agents to play in the three-player ONUW game while using Gemini and GPT-4 as the backend LLMs, respectively. And to demonstrate the impact of discussion on players approximating the equilibria, we also conduct experiments on \textit{ReAct} agents playing the three-player ONUW game without discussion (denoted as \textit{ReAct (w.o.)}). The results are shown in \cref{fig:nashconv}. Firstly, the introduction of discussion in the game increases the difficulty for players to approximate the equilibria. However, the discussion policy (\textit{i.e.}, \textit{LLM-instructed} and \textit{RL-instructed}) can help LLM-based agents make better responses and thereby reduce their NashConv values. And in contrast, the discussion policy trained by RL performs better than directly prompting the LLM. It is interesting to notice that the NashConv values of GPT-4-based agents are higher than Gemini-based ones. Detailed analysis of evaluation logs reveals that it is because when employing Gemini, the two Werewolves are more likely to both vote for the Robber, making it unable to find a better response given their actions.

\subsection{Effectiveness of the Discussion Policy} \label{subsec:effectiveness}
As our agent can better approximate the equilibria in the three-player ONUW game by integrating a discussion policy, we extend the experiments to a five-player version of the game to show its scalability. Since the discussion policy is the key component of our agent, we evaluate its effectiveness by letting our agent play as \textit{Team Village} or \textit{Team Werewolf} against other agents. We adopt two environments with different difficulty levels (\textit{easy} and \textit{hard}), where the initial role assignment and players' night actions are predefined. Compared to the \textit{easy} setting, the Werewolf is switched by the Robber in the \textit{hard} one, resulting in the rise of reasoning difficulty during the game. More detailed environment settings can be found in \cref{appendix:game_setting}.

Considering that the discussion policy is trained on the dataset generated by GPT-4, we employ Gemini as the backend LLM for all four versions of agents to better illustrate the result. The matrices of \textit{Team Village}'s win rates under two different game settings are shown in \cref{fig:fixed_env}. Each column in the matrices corresponds to the win rates that all players on \textit{Team Village} utilize the same version of agent against \textit{Team Werewolf}, where all members also utilize the same version. As shown in each row of two matrices, our agent (playing as \textit{Team Village}) always achieves the highest win rates when competing against the same versions of agents under both game settings. And it is similar for each column. However, it is notable that the performance improvement of our agent is greater when playing as \textit{Team Village} than \textit{Team Werewolf} since the gap in win rates between agents is wider in \textit{Team Village}. We argue it is possibly caused by the game mechanics where the Werewolves tend to have high win rates. Hence, our discussion policy showcases less significant improvement for \textit{Team Werewolf} compared to \textit{Team Village}.

\subsection{Generalizability of the Discussion Policy} \label{subsec:general}

\begin{wrapfigure}{r}{0.6\textwidth}
    \centering
    \vspace{-0.6cm}
    \begin{minipage}{0.6\textwidth}
    \begin{table}[H]
        \caption{Win rates and average votes of our agents when playing the five-player ONUW game as Player 3.}
        \centering
        \vspace{0.1cm}
        \resizebox{\columnwidth}{!}{
        \begin{tabular}{lcccc}
            \toprule
            \multirow{2.5}{*}{Agents} & \multicolumn{2}{c}{Gemini (\textit{Belief})} & \multicolumn{2}{c}{GPT-4 (\textit{Belief})} \\
            \cmidrule{2-5}
             & Win Rates & Avg. Votes & Win Rates & Avg. Votes \\
            \midrule
            \textit{ReAct} & 0.40 & 1.23 & 0.30 & \underline{1.73} \\
            \textit{Belief} & 0.40 & 1.73 & 0.32 & 1.87 \\
            \textit{Random} & 0.37 & 1.53 & 0.32 & 2.03 \\
            \textit{LLM-ins.} & 0.62 & \underline{1.10} & 0.37 & 1.90 \\
            \textit{RL-ins.} & \textbf{0.70} & \underline{1.10} & \textbf{0.50} & 1.87 \\
            \bottomrule
        \end{tabular}}
        \label{table:generalization}
    \end{table}
    \end{minipage}
    \vspace{-0.2cm}
\end{wrapfigure}

Since the discussion policy is trained to deal with various situations, we further carry out experiments in a standard five-player ONUW environment. However, due to the complexity of the role changes during the Night phase, players find it challenging to deduce their own final roles (except Insomniac) and some might even never figure it out. Therefore, we evaluate our RL-instructed LLM-based agent by setting it as one specific player while others as the other versions of agents. In this way, we demonstrate the generalizability of our discussion policy trained by RL, as it can assist the LLM-based agent in gaining better performance no matter what situation occurs to it.

Similarly, we conduct experiments with all five versions of agents and leverage Gemini as the backend LLM. Meanwhile, to reduce the possible impact from the player's index, we designate our agent to play the five-player ONUW game as Player 3. As for other players, we adopt two versions of agents: (1) Gemini-based \textit{Belief} agent and (2) GPT-4-based \textit{Belief} agent. We report the performance of our agents playing as Player 3 against other agents in \cref{table:generalization}. Considering the objective of discussion policy, which is to improve performance and meanwhile clarify or conceal themselves (\textit{i.e.}, get fewer votes), we propose two metrics: \textit{win rates} and \textit{average votes}, to measure the generalizability of discussion policy when playing different roles. The results demonstrate that our agent outperforms other versions when playing against both Gemini-based and GPT-4-based agents. And it can even achieve a tie with GPT-4 in multiple rounds of evaluation.

\section{Conclusion and Future Work} \label{sec:conclusion}
In this work, we propose a novel RL-instructed LLM-based agent framework that achieves outstanding performance in the \textit{One Night Ultimate Werewolf} game. Addressing the ONUW game as a Multi-Phase Extensive-Form Bayesian Game, we highlight the pivotal role of discussion in players' strategies through certain Perfect Bayesian Equilibria in the game. Our experimental results in several settings demonstrate the effectiveness and generalizability of the discussion policy and our proposed agent framework. In general, this work provides new research insights for communication games with inherent uncertainty and the integration of reinforcement learning into LLM-based agents.

One major limitation of our approach is the manual discretization of discussion tactics, which heavily relies on the inherent characteristics within specific communication games. Therefore, as for future work, it is worth investigating how to extract discussion tactics of any communication games automatically based on game logs in an unsupervised fashion. Additionally, it would be interesting to explore the sensitivity of our agents when selecting different combinations of discussion tactics.

\section*{Acknowledgements}
We acknowledge the support from the National Natural Science Foundation of China (Grant No. 62206289), which supported Haifeng Zhang and Xuanfa Jin.

\bibliographystyle{unsrtnat}
\bibliography{references}


\newpage

\startcontents[appendix]

\section*{Appendix}

\renewcommand*\contentsname{Appendix}
\printcontents[appendix]{}{1}{\setcounter{tocdepth}{2}}

\newpage

\appendix
\section{Game Tree of the Game without Discussion}
Here is the game tree of the game without discussion in \cref{subsec:w.o.discussion}.

\begin{figure*}[ht]
\begin{center}
    \centerline{\includegraphics[width=\linewidth]{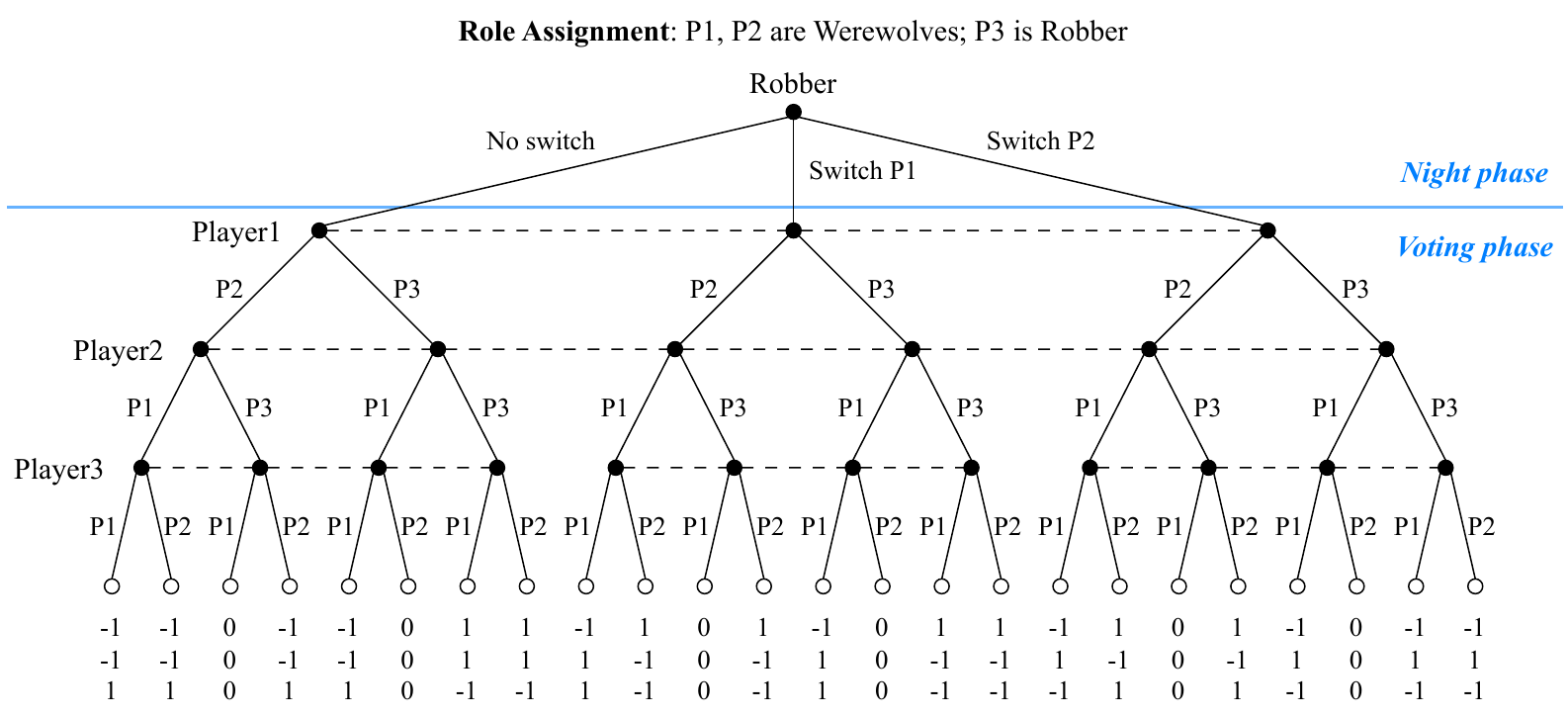}}
    \caption{Game tree of the game without discussion. P1, P2, and P3 represent Player 1, Player 2, and Player 3, respectively. The decision nodes on the same dash lines are in the same information sets for corresponding players. The utilities on leaf nodes are organized by the index of players (\textit{i.e.}, from top to bottom, they are the utilities of Player 1, Player 2, and Player 3).}
    \label{fig:game_tree_1}
\end{center}
\vskip -0.3in
\end{figure*}

\section{Additional Related Work} \label{appendix:review}
In this section, we review literature on the advancement of current LLM-based agents.

Due to the remarkable reasoning and emergent generalization abilities of LLMs, there has been a notable advancement in constructing LLM-based agents for various tasks, including sandbox and strategy games \citep{zhu2023ghost, wang2023voyager, wang2023describe, wang2023avalon, ma2023large}, social simulation \citep{park2023generative, lin2023agentsims, huang2022language}, software development \citep{qian2023communicative} and web operations \citep{nakano2021webgpt, yao2022webshop, deng2023mind2web}, etc. Most of these works leverage LLMs as the "brain" of agents based on the ReAct \citep{yao2022react} paradigm, which generates reasoning traces and actions in an interleaved manner to interact with the environment and thereby make decisions. Chain-of-Thought (CoT) \citep{wei2022chain} and Tree-of-Thought (ToT) \citep{yao2023tree} can improve the performance of LLM-based agents on solving complicated tasks by decomposing them into simpler ones. Specifically, CoT prompts LLMs to think step-by-step, while ToT creates a tree structure of different thoughts and searches for the best one.

\section{Detailed Introduction to the \textit{One Night Ultimate Werewolf} Game} \label{appendix:game_rule}
\textit{One Night Ultimate Werewolf} (ONUW) is a variant of the social game \textit{Werewolf}. In this game, players only have one night to use their abilities and one day to win for their teams. The challenge of this game is the role switch and potential deceptions that create uncertainty and confusion for all players.

\subsection{Game Setup}
The setup of the ONUW game differs based on the number of players. There should always be three more roles than the number of players. At the beginning of the game, the roles are shuffled and one is dealt to each player while three extra roles are dealt to the role pool. After the role assignment, each player should secretly view their role and not reveal it to others. In our work, we mainly consider a five-player ONUW game with 8 roles: 2 Werewolves, 2 Villagers, 1 Seer, 1 Robber, 1 Troublemaker, and 1 Insomniac.

\subsection{Role Descriptions}
Here we only describe the roles supported in our implementation of the game:
\begin{itemize}[leftmargin=20pt, itemsep=3pt]
    \item \textbf{Werewolf}: At night, all Werewolves wake up and look for other Werewolves. If no one else wakes up, it means the other Werewolves are in the role pool. Werewolves are on \textit{Team Werewolf}.

    \item \textbf{Villager}: The Villager does not wake up at night and has no special abilities or information. Players may often claim to be a Villager. The Villager is on \textit{Team Village}.

    \item \textbf{Seer}: At night, the Seer may look either at one other player's role or at two roles in the role pool. The Seer is on \textit{Team Village}.

    \item \textbf{Robber}: At night, the Robber may choose to switch a role with another player and then view its new role. The player who is switched is on \textit{Team Village}. The Robber is on the team of its new role, however, it does not do the action of its new role at night. If the Robber chooses not to switch, it remains the Robber and is on \textit{Team Village}.

    \item \textbf{Troublemaker}: At night, the Troublemaker may swap the roles of two other players without looking at their roles. The players who receive a different role are now on the team of their new role, even though they do not know what role that is until the end of the game. The Troublemaker is on \textit{Team Village}.

    \item \textbf{Insomniac}: The Insomniac wakes up and looks at its role (to see if it has changed) at the end of the night. The Insomniac is on \textit{Team Village}.
\end{itemize}

\subsection{Game Play and Winning Conditions}
There are three phases in the ONUW game, including Night Phase, Day Phase, and Voting Phase.
\begin{itemize}[leftmargin=20pt, itemsep=3pt]
    \item \textbf{Night Phase}: Several roles would be called on at night to do a night action, according to their initial roles. The calling order in our setting is (1) Werewolf, (2) Seer, (3) Robber, (4) Troublemaker, and (5) Insomniac.

    \item \textbf{Day Phase}: After the Night Phase, players discuss amongst themselves who they believe the Werewolves are. But Werewolves might want to claim a different role so that they do not die. Because certain roles change other players' roles, some players will believe they are one role, when they are actually a different one.

    \item \textbf{Voting Phase}: After several rounds of discussion during the Day phase, players vote for other players they believe are most likely to be Werewolf at the same time. The player with the most votes dies and reveals its role. In case of a tie, all players tied with the most votes die and reveal their roles. And if no player receives more than one vote, no one dies.
\end{itemize}

After just one Night, one Day, and one Voting Phase, the game ends. Here are the winning conditions:
\begin{itemize}[leftmargin=20pt, itemsep=3pt]
    \item The \textit{Team Village} wins: (1) If at least one Werewolf dies. Even if one or more players who are not Werewolves die in addition to a Werewolf dying, everyone on \textit{Team Village} wins. (2) If no one is a Werewolf and no one dies. It is possible if all Werewolf roles are in the role pool.

    \item The \textit{Team Werewolf} wins: If at least one player is a Werewolf and no Werewolves are voted out.
\end{itemize}

\section{Proofs for Perfect Bayesian Equilibria} \label{appendix:proof}
Here we present the proofs for Perfect Bayesian Equilibria (PBEs) in \cref{sec:analysis}.

\subsection{Notation}
Without loss of generality, we assign roles to players at the beginning of the game as follows: Player 1 and Player 2 are the Werewolves, and Player 3 is the Robber. There is only one Robber in the game which is known to all players. Let \(A^3_\text{N}\) denote the set of actions that Player 3 could take during the Night phase as a Robber, we have \(A^3_\text{N} = \{\text{No switch}, \text{Switch Player 1}, \text{Switch Player 2}\}\). As for the Voting phase, players' action sets vary according to their index: \(A^1_\text{V} = \{\text{Vote Player 2}, \text{Vote Player 3}\}, A^2_\text{V} = \{\text{Vote Player 1}, \text{Vote Player 3}\}, A^3_\text{V} = \{\text{Vote Player 1}, \text{Vote Player 2}\}\). For Player 3, we denote its behavioral strategy as a Robber in the Night phase as \(\pi^3_\text{R} \in \Delta(A^3_\text{N})\). And Player 3 has three different information sets during the Voting phase depending on its action at night, thus there are three behavioral strategies: \(\pi^3_\text{NS}, \pi^3_\text{S1}, \pi^3_\text{S2} \in \Delta(A^3_\text{V})\), corresponding to cases where Player 3 did not switch, switched with Player 1 and with Player 2. And for Player 1 and Player 2, they only have behavioral strategies during the Voting phase, which are defined as \(\pi^1 \in \Delta(A^1_\text{V}), \pi^2 \in \Delta(A^2_\text{V})\).

Regarding beliefs, we denote the beliefs that Player 1 and Player 2 hold when they are in the Voting phase as \(b^1, b^2\). Meanwhile, let \(b^3_\text{NS}, b^3_\text{S1}, b^3_\text{S2}\) denote the beliefs that Player 3 holds when it is at different information sets in the Voting phase. In subsequent analyses, the order of probabilities in all beliefs is consistent with that of decision nodes in their corresponding information sets in the game trees (\textit{e.g.}, \cref{fig:game_tree_2}) from left to right. After the game ends, the winning condition depends on the final role of each player. If player \(i\) wins, its utility \(R^i = 1\); if loses, \(R^i = -1\); and if draws, \(R^i = 0\).

From the game tree in \cref{fig:game_tree_2} and \cref{fig:game_tree_1}, we find the weakly dominant strategies for \(\pi^3_\text{S1}\) and \(\pi^3_\text{S2}\):
\begin{equation} \label{eq:weakly}
    \pi^{3, *}_\text{S1} = (1, 0),\ \pi^{3, *}_\text{S2} = (0, 1)
\end{equation}
Since it is hard to directly calculate the PBEs of any given game, we leverage the necessary condition of PBEs that \textit{Any PBEs of a game must be the Nash Equilibria (NEs)}, so we first calculate the NEs of a game and then verify whether there exist PBEs in them. And to simplify the calculation, we adopt \cref{eq:weakly} as part of NEs in subsequent proofs.

\begin{assumption} \label{assumption:homo}
    Assume that the two \textit{original Werewolves} are homogeneous, which means there is no difference between them and so are their belief models.
\end{assumption}

\subsection{Proof for Theorem \ref{theorem:equilibrium_1}} \label{appendix:proof_1}
\begin{theorem}
    For the ONUW game with two Werewolves and one Robber, in the case where discussion is not allowed, there exist PBEs \((\bm{\pi}^*, \bm{b}^*)\):

    \begin{minipage}{0.49\textwidth}
    \begin{equation}
        \left\{\begin{aligned}
            & \pi^{1, *} = (1, 0) \\
            & \pi^{2, *} = (1, 0) \\
            & \pi^{3, *}_\text{R} = \left(0, \frac{1}{2}, \frac{1}{2}\right) \\
            & \pi^{3, *}_\text{NS} = (p, 1 - p) \\
            & \pi^{3, *}_\text{S1} = (1, 0) \\
            & \pi^{3, *}_\text{S2} = (0, 1)
        \end{aligned}\right.
    \end{equation}
    \end{minipage}
    \begin{minipage}{0.49\textwidth}
    \begin{equation}
        \left\{\begin{aligned}
            & b^{1, *} = \left(0, \frac{1}{2}, \frac{1}{2}\right) \\
            & b^{2, *} = \left(0, 0, \frac{1}{2}, 0, \frac{1}{2}, 0\right) \\
            & b^{3, *}_\text{NS} = (1, 0, 0, 0) \\
            & b^{3, *}_\text{S1} = (1, 0, 0, 0) \\
            & b^{3, *}_\text{S2} = (1, 0, 0, 0)
        \end{aligned}\right.
    \end{equation}
    \end{minipage}

    where \(p \in [0, 1]\), hence there are infinite equilibria in this case. And the expected utilities of all players in these equilibria are:
    \begin{equation}
        \mathbb{E}_{\bm{\pi}^*}\left[R^1\right] = \mathbb{E}_{\bm{\pi}^*}\left[R^2\right] = 0,\ \mathbb{E}_{\bm{\pi}^*}\left[R^3\right] = 1
    \end{equation}
\end{theorem}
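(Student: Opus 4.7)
The plan is to verify that the quintuple of strategies and beliefs is a Perfect Bayesian Equilibrium by checking the two defining conditions -- belief consistency and sequential rationality -- at every information set, proceeding bottom-up from the Voting phase to the Night phase. Since a PBE must in particular be a Nash equilibrium, I can first pin down \(\pi^{3,*}_{\text{S1}}=(1,0)\) and \(\pi^{3,*}_{\text{S2}}=(0,1)\) as weakly dominant voting strategies in Player 3's two ``switched'' information sets: after switching with Player \(i\) the final role of Player \(i\) becomes Village while Player 3 becomes Werewolf, and a direct tabulation over the four residual pure-action profiles of Players 1 and 2 shows that voting Player \(i\) is strictly better in two profiles (it eliminates a villager while keeping both Werewolves alive) and outcome-equivalent in the other two, so weak dominance holds.

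Given these fixed continuations, I would next verify the Voting-phase strategies at the other information sets. For Players 1 and 2 I would compute expected payoffs of both pure votes under belief \(b^{1,*}\) (respectively \(b^{2,*}\)), which is Bayesian-consistent with \(\pi^{3,*}_{\text{R}}=(0,1/2,1/2)\) and, for Player 2, with the deterministic \(\pi^{1,*}\); the arithmetic yields expected payoff \(0\) for both candidate votes, so each pure strategy is a best response, explaining why the equilibrium set is not unique. For Player 3 at its NS information set, both pure votes yield \(R^3=1\) -- the two other players are both final-role Werewolf and \(\pi^{1,*},\pi^{2,*}\) guarantee one of them is eliminated -- so any mixture \((p,1-p)\) with \(p\in[0,1]\) is optimal, which accounts for the continuum of equilibria. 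Rolling back to the Night phase, all three Night branches yield continuation payoff \(R^3=1\) for Player 3, so Player 3 is indifferent across Night actions and \((0,1/2,1/2)\) is a best response. Belief consistency then follows by direct application of \cref{eq:belief_update}: on the equilibrium path \(b^{1,*}\) is exactly the mixing weights of \(\pi^{3,*}_{\text{R}}\); \(b^{2,*}\) further conditions on the deterministic \(\pi^{1,*}\); and Player 3's voting-phase beliefs concentrate on the unique pure profile induced by \(\pi^{1,*},\pi^{2,*}\). The NS information sets are reached with probability zero, so their beliefs are unrestricted and the listed values are admissible.

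The main obstacle is combinatorial rather than conceptual: every (Night action, voting-profile) cell of the payoff table has to be signed with the ONUW tie-breaking rule applied correctly, since ties interact non-trivially with the role swaps and with the ``nobody dies when no player exceeds one vote'' clause. Care is also needed in checking sequential rationality at the off-path NS information set of Player 3, where the freely chosen belief \((1,0,0,0)\) must not induce a deviation preference; this is immediate because every terminal reached from NS under Players 1 and 2's equilibrium strategies yields \(R^3=1\). The closing step is integrating expected utilities across the Night branches: Player 3 wins on every branch so \(\mathbb{E}R^3=1\); for Player 1 the NS branch carries zero weight, the S1 branch yields \(-1\) (final-Village Player 1 dies by a 2-1 vote) and the S2 branch yields \(+1\) (final-Werewolf Player 1 survives while its switched teammate falls), averaging to \(0\), and Player 2 follows by the P1/P2 symmetry.
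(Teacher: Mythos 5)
Your overall plan---fix the weakly dominant strategies at Player 3's two ``switched'' information sets, then verify sequential rationality bottom-up and belief consistency via Bayes' rule---is sound and close in spirit to the paper's proof. (The paper additionally derives the candidate profile first, by parameterizing symmetric strategies \(\pi^3_\text{R}=(1-2s,s,s)\), \(\pi^1=\pi^2=(1-q,q)\) and solving first-order conditions to obtain \(s^*=1/2\), before running the PBE verification; your direct verification skips that step, which is legitimate since the profile is given.) Your handling of Player 3's indifference at the NS set and in the Night phase, and of the unrestricted off-path beliefs, matches the paper.

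However, there is a concrete error in your best-response check for the Werewolves. You claim both pure votes give Players 1 and 2 expected payoff \(0\), so that ``each pure strategy is a best response, explaining why the equilibrium set is not unique.'' In the paper's payoff accounting, the three-way \(1\)--\(1\)--\(1\) vote---which arises exactly when a Werewolf deviates to voting Player 3 in the branch where Player 3 switched with the \emph{other} Werewolf---is a draw worth \(0\) to everyone, not a Team Werewolf win; consequently \(\mathbb{E}^1[\text{P3}]=\tfrac12(-1)+\tfrac12(0)=-\tfrac12<0=\mathbb{E}^1[\text{P2}]\), and likewise \(\mathbb{E}^2[\text{P3}]=\tfrac{q-1}{2}<\tfrac{q}{2}=\mathbb{E}^2[\text{P1}]\). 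Voting the fellow Werewolf is therefore a \emph{strict} best response; this strictness is what pins down \(q=0\) and makes \(b^{2,*}=\bigl(0,0,\tfrac12,0,\tfrac12,0\bigr)\) Bayes-consistent. The continuum of equilibria in the theorem comes solely from \(p\in[0,1]\) at Player 3's off-path NS set, not from Werewolf indifference; if the Werewolves really were indifferent, arbitrary mixtures \(\pi^1,\pi^2\) would also have to be admitted and the stated beliefs would no longer be consistent with the profile. You correctly flag the tie-breaking rule as the delicate point, but you resolve it the wrong way for precisely this cell of the payoff table. (A minor additional slip: at the S1/S2 sets, voting the switched player is strictly better in three of the four residual vote profiles of the other two players, not two, though the weak-dominance conclusion still holds.)
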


\begin{proof}
Considering the symmetry of Player 1 and Player 2 in the game without discussion, we assume that they share the same probability of voting for Player 3, and the Robber (\textit{i.e.}, Player 3) switches with any of them with the same probability. Therefore, we can set
\begin{equation} \label{eq:strategies_1}
    \pi^3_\text{R} = (1-2s, s, s),\ \pi^1 = (1-q, q),\ \pi^2 = (1-q, q),\ \pi^3_\text{NS} = (p, 1-p)
\end{equation}
where \(0 \leq s \leq 1/2\) and \(0 \leq p,q \leq 1\). Based on the given strategy profile \(\bm{\pi}\) and the utilities on the game tree \cref{fig:game_tree_1}, the expected utilities of each player can be written as:
\begin{align}
    \mathbb{E}_{\bm{\pi}}\left[R^1\right] &= \left(1 - 2s\right)\left(q^2 + q - 1\right) \\
    \mathbb{E}_{\bm{\pi}}\left[R^2\right] &= \left(1 - 2s\right)\left(q^2 + q - 1\right) \\
    \mathbb{E}_{\bm{\pi}}\left[R^3\right] &= -q^2 - q + 1
\end{align}

According to the definition of NEs, if the strategy profile \(\bm{\pi}\) is a NE, it can be derived that:
\begin{align}
    \frac{\partial \mathbb{E}_{\bm{\pi}}\left[R^1\right]}{\partial q} &= \frac{\partial \mathbb{E}_{\bm{\pi}}\left[R^2\right]}{\partial q} = (1 - 2s) (2q + 1) = 0 \\
    \frac{\partial \mathbb{E}_{\bm{\pi}}\left[R^3\right]}{\partial s} &= 0 \\
    \frac{\partial \mathbb{E}_{\bm{\pi}}\left[R^3\right]}{\partial p} &= 0
\end{align}
So when \(s^* = 1/2\), the strategy profiles \(\bm{\pi}\) in \cref{eq:strategies_1} become NEs of this game.

To find out possible PBEs in the NEs above, we form beliefs \(\bm{b}\) that satisfy the \textit{Bayesian Consistency} requirement:
\begin{equation}
    \begin{split}
        b^1 = \left(0, \frac{1}{2}, \frac{1}{2}\right),\ b^2 = \left(0, 0, \frac{1-q}{2}, \frac{q}{2}, \frac{1-q}{2}, \frac{q}{2}\right), \\
        b^3_\text{NS} = b^3_\text{S1} = b^3_\text{S2} = \left((1-q)^2, (1-q)q, (1-q)q, q^2\right)
    \end{split}
\end{equation}
Then we need to figure out the exact \(p\) and \(q\) that make strategies meet the \textit{Sequential Rationality} requirement. Let \(\mathbb{E}^i[a], a \in \mathcal{A}^i(h)\) denote the expected utility of player \(i\) taking action \(a\) at information set \(h\). Therefore, we analyze the best response of each player at each information set as follows:
\begin{itemize}[leftmargin=20pt, itemsep=3pt]
    \item \textbf{For Player 3 given belief \(b^3_\text{NS}\)}:
    \begin{equation}
        \begin{aligned}
            \mathbb{E}^3\left[\text{P1}\right] &= (1-q)^2 + (1-q)q - q^2 = -q^2 - q + 1, \\
            \mathbb{E}^3\left[\text{P2}\right] &= (1-q)^2 + (1-q)q - q^2 = -q^2 - q + 1
        \end{aligned}
    \end{equation}
    It indicates Player 3 can take any mixed behavioral strategy, which means \(p\) could be any value between 0 and 1.

    \item \textbf{For Player 2 given belief \(b^2\)}: Note that when calculating the expected utilities of player's each action, the other players’ subsequent strategies should be optimal given current beliefs. So following the best responses of Player 3, we have
    \begin{equation}
        \begin{aligned}
            \mathbb{E}^2\left[\text{P1}\right] &= \frac{1-q}{2} + \frac{q}{2} - \frac{1-q}{2} = \frac{q}{2}, \\
            \mathbb{E}^2\left[\text{P3}\right] &= -\frac{q}{2} - \frac{1-q}{2} + \frac{q}{2} = \frac{q-1}{2}
        \end{aligned}
    \end{equation}
    Obviously, \(\mathbb{E}^2[\text{P1}] > \mathbb{E}^2[\text{P3}]\) no matter what value \(q \in [0, 1]\) is. So the best response of Player 2 is voting for Player 1, which is equal to \(q = 0\).

    \item \textbf{For Player 1 given belief \(b^1\)}: Similarly, considering the best responses of Player 2 and Player 3, we have
    \begin{equation}
        \mathbb{E}^1\left[\text{P2}\right] = -\frac{1}{2} + \frac{1}{2} = 0,\ \mathbb{E}^1\left[\text{P3}\right] = -\frac{1}{2}
    \end{equation}
    Hence the best response of Player 1 is voting for Player 2, which is equal to \(q = 0\), consistent with the result derived from Player 2's best response.

    \item \textbf{For Player 3 during the Night phase}: 
    \begin{equation}
        \mathbb{E}^3\left[\text{No switch}\right] = 1,\ \mathbb{E}^3\left[\text{Switch P1}\right] = 1,\ \mathbb{E}^3\left[\text{Switch P2}\right] = 1
    \end{equation}
    So strategy \(\pi^3_\text{R} = \left(0, \frac{1}{2}, \frac{1}{2}\right)\) is one of the best responses of Player 3 as a Robber during the Night phase.
\end{itemize}

The analysis above verifies that the following strategies and beliefs profile \((\bm{\pi}^*, \bm{b}^*)\) form the PBEs of the game:
\begin{gather}
    \bm{\pi}^* = \left[\pi^{3, *}_{\text{R}}, \pi^{1, *}, \pi^{2, *}, \pi^{3, *}_\text{NS}, \pi^{3, *}_\text{S1}, \pi^{3, *}_\text{S2}\right] = \left[\left(0, \frac{1}{2}, \frac{1}{2}\right), (1, 0),  (1, 0), (p, 1 - p), (1, 0), (0, 1)\right] \\
    \bm{b}^* = \left[b^{1, *}, b^{2, *}, b^{3, *}_\text{NS}, b^{3, *}_\text{S1}, b^{3, *}_\text{S2}\right] = \left[\left(0, \frac{1}{2}, \frac{1}{2}\right), \left(0, 0, \frac{1}{2}, 0, \frac{1}{2}, 0\right), (1, 0, 0, 0), (1, 0, 0, 0), (1, 0, 0, 0)\right]
\end{gather}
where \(p \in [0, 1]\). So \cref{theorem:equilibrium_1} is proved.
\end{proof}

\subsection{Proof for Theorem \ref{theorem:equilibrium_2}} \label{appendix:proof_2}
\begin{theorem}
    For the ONUW game with two Werewolves and one Robber, in the case that both Werewolves form beliefs about the Robber's night action with probabilities of \((\alpha, \beta, \gamma)\) (\(\alpha \neq 0\)), there exist PBEs \((\bm{\pi}^*, \bm{b}^*)\) during the Voting phase:

    \begin{minipage}{0.38\textwidth}
    \begin{equation}
        \left\{\begin{aligned}
            & \pi^{1, *} = (1-q, q) \\
            & \pi^{2, *} = (1-q, q) \\ 
            & \pi^{3, *}_\text{NS} = (p, 1-p) \\
            & \pi^{3, *}_\text{S1} = (1, 0) \\
            & \pi^{3, *}_\text{S2} = (0, 1)
        \end{aligned}\right.
    \end{equation}
    \end{minipage}
    \begin{minipage}{0.6\textwidth}
    \begin{equation}
        \left\{\begin{aligned}
            & b^{1, *} = (\alpha, \beta, \gamma) \\
            & b^{2, *} = \left(\alpha(1-q), \alpha q, \beta(1-q), \beta q, \gamma (1-q), \gamma q\right) \\
            & b^{3, *}_\text{NS} = \left((1-q)^2, (1-q)q, (1-q)q, q^2\right) \\
            & b^{3, *}_\text{S1} = \left((1-q)^2, (1-q)q, (1-q)q, q^2\right) \\
            & b^{3, *}_\text{S2} = \left((1-q)^2, (1-q)q, (1-q)q, q^2\right)
        \end{aligned}\right.
    \end{equation}
    \end{minipage}
    
    where \(q = (\beta + \gamma - \alpha)/2\alpha, p = (\alpha^2 + \beta^2 - \gamma^2)/2\alpha^2\). To ensure \(p\) and \(q\) are probabilities and the existence of the equilibria, there are constraints on the belief distribution:
    \begin{equation} \label{eq:constraints}
        \left\{\begin{aligned}
            & \alpha + \beta + \gamma = 1 \\
            & \frac{1}{4} \leq \alpha \leq \frac{1}{2} \\
            & \frac{1 - 2\alpha}{2 - 2\alpha} \leq \gamma \leq \frac{2\alpha^2 - 2\alpha + 1}{2 - 2\alpha}
        \end{aligned}\right.
    \end{equation}
    Under the constraints above, the expected utilities of all players in these equilibria are:
    \begin{align}
        \mathbb{E}_{\bm{\pi}^*}\left[R^1\right] &= \left(\frac{1}{4\alpha^2} - \frac{1}{2\alpha} - 1\right)(1 - 2\gamma) \\
        \mathbb{E}_{\bm{\pi}^*}\left[R^2\right] &= \left(\frac{1}{4\alpha^2} - \frac{1}{2\alpha} - 1\right)(1 - 2\beta) \\
        \mathbb{E}_{\bm{\pi}^*}\left[R^3\right] &= -\left(\frac{1}{4\alpha^2} - \frac{1}{2\alpha} - 1\right)
    \end{align}
\end{theorem}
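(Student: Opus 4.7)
The plan is to mirror the proof template used for Theorem 4.1 in Appendix D.2, but now carrying the posterior parameters $(\alpha, \beta, \gamma)$ through the analysis rather than exploiting the left/right symmetry that held in the no-discussion case. First I would fix the two weakly dominant Voting-phase strategies $\pi^{3,*}_{\text{S1}} = (1,0)$ and $\pi^{3,*}_{\text{S2}} = (0,1)$ already identified in equation~(12), which encode the fact that a Robber holding an original Werewolf's card will always vote for that Werewolf. Invoking the homogeneity Assumption~4.3, I would then parameterize the remaining mixed strategies as $\pi^1 = (1-q,q)$, $\pi^2 = (1-q,q)$, and $\pi^3_\text{NS} = (p, 1-p)$, reducing the search space to the two scalars $p, q \in [0,1]$.

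Next, I would read expected utilities off the game tree in Figure~3: for each of the three branches corresponding to Player~3's night action, weight the leaf payoffs by the Werewolves' belief mass $\alpha$, $\beta$, or $\gamma$ and by the Voting-phase mixture. This yields closed-form polynomial expressions for $\mathbb{E}_{\bm{\pi}}[R^1]$, $\mathbb{E}_{\bm{\pi}}[R^2]$, and $\mathbb{E}_{\bm{\pi}}[R^3]$ in $p, q$. Imposing the Nash first-order conditions $\partial \mathbb{E}_{\bm{\pi}}[R^i]/\partial q = 0$ for $i=1,2$ and $\partial \mathbb{E}_{\bm{\pi}}[R^3]/\partial p = 0$ produces two linear equations whose solutions are exactly the claimed $q = (\beta + \gamma - \alpha)/2\alpha$ and $p = (\alpha^2 + \beta^2 - \gamma^2)/2\alpha^2$. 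I would then construct the belief system $\bm{b}^*$ by applying Bayes' rule (equation~(2)) at each information set: $b^{1,*} = (\alpha, \beta, \gamma)$ is the posterior after discussion; $b^{2,*}$ is obtained by further conditioning on Player~1's mixed vote at rate $q$; and each of $b^{3,*}_{\text{NS}}, b^{3,*}_{\text{S1}}, b^{3,*}_{\text{S2}}$ is the product measure on the Werewolves' independent mixed votes. Sequential rationality at every information set is verified exactly as in Appendix~D.2: one checks that every action in the support yields the same continuation value given the others' candidate strategies, using the weakly dominant $\pi^{3,*}_{\text{S1}}, \pi^{3,*}_{\text{S2}}$ downstream of Player~3's night choice.

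To obtain the utility formulas, I would substitute $p, q$ back into the expected-utility expressions and simplify. A direct computation gives $\mathbb{E}_{\bm{\pi}^*}[R^3] = -\delta$ with $\delta = 1/(4\alpha^2) - 1/(2\alpha) - 1$, and the corresponding Werewolf utilities are $\delta(1-2\gamma)$ and $\delta(1-2\beta)$, the asymmetry tracking the belief that each Werewolf is the one who was switched. The value $\delta$ conveniently collects the $\alpha$-dependence, which is why the Robber's utility depends only on the ``no switch'' probability $\alpha$.

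The main obstacle will be carving out the constraint region~(17). Two things must be checked simultaneously: (i)~$p, q \in [0,1]$ as probabilities, which translate into $\alpha \in [1/4, 1/2]$ and bounds on $\gamma$ of the form $(1-2\alpha)/(2-2\alpha) \leq \gamma \leq (2\alpha^2 - 2\alpha + 1)/(2-2\alpha)$; and (ii)~the candidate mixtures are genuine best responses, meaning the indifference conditions actually bind and no pure deviation strictly dominates. The second check is the delicate part, because away from indifference one must argue that a mixed strategy is supported only when both pure actions tie in expected payoff; this forces the precise two-sided bounds on $\gamma$ in terms of $\alpha$, and tightens the naive constraint $\alpha + \beta + \gamma = 1$ into the stated region. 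Once these inequalities are established, the claimed PBEs exist throughout the interior and the utility expressions hold verbatim.
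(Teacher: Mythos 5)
Your overall architecture matches the paper's proof in Appendix D.2: fix the weakly dominant $\pi^{3,*}_\text{S1}, \pi^{3,*}_\text{S2}$, parameterize the remaining Voting-phase mixtures, solve the Nash indifference conditions for $p$ and $q$, build Bayes-consistent beliefs, verify sequential rationality by showing each supported action ties in continuation value, and read the constraints off $p,q\in[0,1]$. However, there is one step where your plan, as written, would fail.

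You invoke the homogeneity assumption to set $\pi^1=\pi^2=(1-q,q)$ \emph{before} imposing the first-order conditions, reducing the problem to the two scalars $p,q$. That assumption only licenses the two Werewolves sharing the belief $(\alpha,\beta,\gamma)$; it does not license equal mixing probabilities, because when $\beta\neq\gamma$ the two Werewolves are in asymmetric strategic positions. Worse, the collapse breaks the computation: in Player 1's expected utility the Robber's mixing probability $p$ enters only through a term proportional to $(q_2-q_1)p$, so once you set $q_1=q_2=q$ that term vanishes, $\mathbb{E}_{\bm\pi}[R^1]$ collapses to $(\alpha+\beta-\gamma)(q^2+q-1)$, and the condition $\partial\mathbb{E}_{\bm\pi}[R^1]/\partial q=0$ yields $(\alpha+\beta-\gamma)(2q+1)=0$ --- which is not the claimed $q=(\beta+\gamma-\alpha)/2\alpha$ and generically has no solution in $[0,1]$. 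The Nash condition must be a \emph{unilateral} deviation: you need $\partial\mathbb{E}_{\bm\pi}[R^1]/\partial q_1$ holding $q_2$ fixed. The paper therefore keeps $q_1$ and $q_2$ distinct, obtains the three conditions $(\alpha+\beta-\gamma)q_2-\alpha p+\alpha-\gamma=0$, $(\alpha-\beta+\gamma)q_1+\alpha p-\beta=0$, and $\alpha(q_1-q_2)=0$, and it is the \emph{third} condition (Player 3's indifference) together with $\alpha\neq 0$ that forces $q_1=q_2$; only then do the first two yield the claimed $p$ and $q$. You should restructure your argument this way, or at minimum verify the symmetric candidate against unilateral deviations rather than deriving it from a common-$q$ derivative. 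A second, smaller point: the two-sided bounds on $\gamma$ in the constraint region come entirely from $0\leq q\leq 1$ and $0\leq p\leq 1$ (as in the paper's derivation), not from the best-response verification, which at the solution reduces to confirming exact indifference at every information set.
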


\begin{proof}
Assuming both Player 1 and Player 2 form beliefs about Player 3's night actions with probabilities of \((\alpha, \beta, \gamma)\), based on Player 3's speech in the game with discussion. Since now the beliefs on the probability of Player 3 switching with Player 1 may not be equal to that with Player 2 (\textit{i.e.}, there might be \(\beta \neq \gamma\)), the probabilities of Player 1 and Player 2 voting for Player 3 is no longer the same. In order to find the PBEs during the Voting phase in this case, we set
\begin{equation} \label{eq:strategies_2}
    \pi^1 = (1 - q_1, q_1),\ \pi^2 = (1 - q_2, q_2),\ \pi^3_\text{NS} = (p, 1 - p)
\end{equation}
where \(0 \leq q_1, q_2, p \leq 1\). Similarly, we can get the expected utilities of each player based on the beliefs that Player 1 and Player 2 hold:
\begin{align}
    \mathbb{E}_{\bm{\pi}}\left[R^1\right] &= \alpha[q_1q_2 + (q_2 - q_1)p + q_1 - 1] + \beta (q_1q_2 + q_2 - 1) - \gamma (q_1 q_2 + q_1 - 1) \\
    \mathbb{E}_{\bm{\pi}}\left[R^2\right] &= \alpha[q_1q_2 + (q_2 - q_1)p + q_1 - 1] - \beta (q_1q_2 + q_2 - 1) + \gamma (q_1q_2 + q_1 - 1) \\
    \mathbb{E}_{\bm{\pi}}\left[R^3\right] &= -\alpha[q_1q_2 + (q_2 - q_1)p + q_1 - 1] - \beta (q_1q_2 + q_2 - 1) - \gamma (q_1q_2 + q_1 - 1)
\end{align}

For each player, in order to achieve the NEs, their strategies must satisfy the following conditions:
\begin{align}
    \label{eq:grad_1} \frac{\partial \mathbb{E}_{\bm{\pi}}\left[R^1\right]}{\partial q_1} &= (\alpha + \beta - \gamma)q_2 - \alpha p + \alpha - \gamma = 0 \\
    \label{eq:grad_2} \frac{\partial \mathbb{E}_{\bm{\pi}}\left[R^2\right]}{\partial q_2} &= (\alpha - \beta + \gamma)q_1 + \alpha p - \beta = 0 \\
    \label{eq:grad_3} \frac{\partial \mathbb{E}_{\bm{\pi}}\left[R^3\right]}{\partial p} &= \alpha (q_1 - q_2) = 0
\end{align}

Based on \cref{eq:grad_3}, there are three possible cases: (a) \(\alpha = 0, q_1 = q_2\); (b) \(\alpha = 0, q_1 \neq q_2\) and (c) \(\alpha \neq 0, q_1 = q_2\).

For case (a), it is under the same situation of \cref{theorem:equilibrium_1}, so PBEs in that theorem also hold in the current case. The results are omitted for brevity.

For case (b), we can derive from \cref{eq:grad_1} and \cref{eq:grad_2} that:
\begin{equation}
    (\gamma - \beta) q_1 = \beta,\ (\beta - \gamma) q_2 = \gamma
\end{equation}
However, it is impossible to keep both \(q_1\) and \(q_2\) non-negative while \(\beta \neq \gamma\), as \(\beta = \gamma\) is invalid.

For case (c), we can derive that:
\begin{equation} \label{eq:pbe_prob_2}
    q_1 = q_2 = q^* = \frac{\beta + \gamma - \alpha}{2\alpha},\ p^* = \frac{\alpha^2 + \beta^2 - \gamma^2}{2\alpha^2}
\end{equation}
Also, to keep both \(p^*\) and \(q^*\) being probabilities, there are constraints for belief \((\alpha, \beta, \gamma)\):
\begin{equation} \label{eq:constraints_ap}
    \left\{\begin{aligned}
        & \alpha > 0 \\
        & 0 \leq \beta + \gamma - \alpha \leq 2a \\
        & 0 \leq \alpha^2 + \beta^2 - \gamma^2 \leq 2a^2 \\
        & \alpha + \beta + \gamma = 1
    \end{aligned}\right.
    \Longrightarrow
    \left\{\begin{aligned}
        & \alpha + \beta + \gamma = 1 \\
        & \frac{1}{4} \leq \alpha \leq \frac{1}{2} \\
        & \frac{1 - 2\alpha}{2 - 2\alpha} \leq \gamma \leq \frac{2\alpha^2 - 2\alpha + 1}{2 - 2\alpha}
    \end{aligned}\right.
\end{equation}

Similarly, we can form beliefs \(\bm{b}\) that satisfy the \textit{Bayesian Consistency} requirement for all players, based on the result derived from NEs:
\begin{equation} \label{eq:pbe_belief_2}
    \begin{gathered}
        b^1 = (\alpha, \beta, \gamma),\ b^2 = \left(\alpha(1-q^*), \alpha q^*, \beta(1-q^*), \beta q^*, \gamma(1-q^*), \gamma q^*\right), \\
        b^3_\text{NS} = b^3_\text{S1} = b^3_\text{S2} = \left((1-q^*)^2, (1-q^*)q^*, (1-q^*)q^*, {q^*}^2\right)
    \end{gathered}
\end{equation}
To further check whether the strategies derived from NEs above satisfy the \textbf{Sequential Rationality} requirement, we also analyze the best responses of each player at each information set during the Voting phase as follows:
\begin{itemize}[leftmargin=20pt, itemsep=3pt]
    \item \textbf{For Player 3 given belief \(b^3_\text{NS}\)}: There are
    \begin{equation}
        \begin{aligned}
            \mathbb{E}^3\left[\text{P1}\right] &= (1-q^*)^2 + (1-q^*)q^* - {q^*}^2 = -{q^*}^2 - q^* + 1, \\
            \mathbb{E}^3\left[\text{P2}\right] &= (1-q^*)^2 + (1-q^*)q^* - {q^*}^2 = -{q^*}^2 - q^* + 1
        \end{aligned}
    \end{equation}
    It shows any mixed behavioral strategies of Player 3 at this information set is optimal, so strategy \(\pi^3_\text{NS} = (p^*, 1-p^*)\) is the best response.

    \item \textbf{For Player 2 given belief \(b^2\)}: Following the best responses of Player 3, we can derive:
    \begin{equation}
        \begin{aligned}
            \mathbb{E}^2\left[\text{P1}\right] &= -\alpha(1-q^*) - \alpha q^*p^* + \beta(1-q^*) + \beta q^* - \gamma(1-q^*) \\
            &= \left(\frac{1}{4\alpha^2} - \frac{1}{2\alpha} - 1\right)(1 - 2\beta), \\
            \mathbb{E}^2\left[\text{P3}\right] &= -\alpha(1-q^*)(1-p^*) + \alpha q^* - \beta q^* - \gamma(1 - q^*) + \gamma q^* \\
            &= \left(\frac{1}{4\alpha^2} - \frac{1}{2\alpha} - 1\right)(1 - 2\beta) \\
        \end{aligned}
    \end{equation}
    Hence, any mixed behavioral strategies are also best responses, so does the strategy \(\pi^2 = (1-q^*, q^*)\).

    \item \textbf{For Player 1 given belief \(b^1\)}: Considering the best responses of Player 2 and Player 3, we have:
    \begin{equation}
        \begin{aligned}
            \mathbb{E}^1\left[\text{P2}\right] &= \alpha(p^*q^* - 1) + \beta(q^* - 1) + \gamma = \left(\frac{1}{4\alpha^2} - \frac{1}{2\alpha} - 1\right)(1 - 2\gamma), \\
            \mathbb{E}^1\left[\text{P3}\right] &= \alpha(p^*q^* - p^* + q^*) + \beta(2q^* - 1) - \gamma q^* = \left(\frac{1}{4\alpha^2} - \frac{1}{2\alpha} - 1\right)(1 - 2\gamma)
        \end{aligned}
    \end{equation}
    which also showcase the strategy \(\pi^1 = (1-q^*, q^*)\) is the best response of Player 1 given belief \(b^1 = (\alpha, \beta, \gamma)\) and other players best responses.
\end{itemize}

Therefore, the analysis above demonstrates the strategies \(\bm{\pi}^*\) and beliefs in \cref{eq:pbe_belief_2} form the PBEs during the Voting phase of the game:
\begin{gather}
    \bm{\pi}^* = \left[\pi^{1, *}, \pi^{2, *}, \pi^{3, *}_\text{NS}, \pi^{3, *}_\text{S1}, \pi^{3, *}_\text{S2}\right] = \left[(1-q^*, q^*),  (1-q^*, q^*), (p^*, 1-p^*), (1, 0), (0, 1)\right]
\end{gather}
where \(p^*, q^*\) follow \cref{eq:pbe_prob_2}, while \((\alpha, \beta, \gamma)\) is under the constraints of \cref{eq:constraints_ap}. So \cref{theorem:equilibrium_2} is proved.
\end{proof}

\section{Implementation and Experiment Details} \label{appendix:exp_details}
\subsection{Data Collection and Statistics} \label{appendix:data}
In our experiment, we leverage GPT-4 to play the five-player ONUW games on 40 different game settings (\textit{i.e.}, the role assignments) and repeat the games 3 times on each setting to gather diverse data, due to the randomness of the initial role assignment and the text generation of LLMs. Hence, we collect 120 game logs, each containing 15 discussion turns, resulting in a total of 1800 transitions that can be used for training.

To explore the GPT-4's preference for being honest or deceptive when playing different roles in the ONUW game, we analyzed the percentage of different discussion tactics that GPT-4 selects for each initial role in our dataset, as shown in \cref{table:statistics}. The results indicate that GPT-4 tends to be deceptive most of the time when it plays on \textit{Team Werewolf} and honest when on \textit{Team Village}, which aligns with the game objective and intuition. Notably, the Robber and Insomniac have a higher frequency of being deceptive than other players on \textit{Team Village}. This is because they are the only two roles that can check their roles again, and if they see themselves becoming Werewolves, then may choose to be deceptive during the discussion. It is also interesting to note that GPT-4 rarely defends itself. We speculate that this is because accusing others can help transfer contradictions more effectively.

\begin{table}[!ht]
    \caption{Statistics on game logs generated by GPT-4. Here the \textbf{E}, \textbf{A} and \textbf{D} represent the discussion tactic of \textbf{Evidence}, \textbf{Accusation} and \textbf{Defense}, respectively.}
    \centering
    \vspace{0.1cm}
    \begin{tabular}{lcccccc}
        \toprule
        \multirow{2.5}{*}{Initial Role} & \multicolumn{3}{c}{Honest} & \multicolumn{3}{c}{Deceptive} \\
        \cmidrule{2-7}
         & E(\%) & A(\%) & D(\%) & E(\%) & A(\%) & D(\%) \\
        \midrule
        Werewolf & 9.3 & 11.9 & 3.6 & 39.1 & 24.2 & 11.9 \\
        Villager & 60.3 & 33.1 & 4.3 & 0.5 & 0.9 & 0.9 \\
        Seer & 74.3 & 23.0 & 1.1 & 1.1 & 0 & 0.5 \\
        Robber & 56.8 & 18.4 & 2.1 & 12.0 & 6.4 & 4.3 \\
        Troublemaker & 75.8 & 21.3 & 1.4 & 0.5 & 0.5 & 0.5 \\
        Insomniac & 70.4 & 16.9 & 1.9 & 4.1 & 3.7 & 3.0 \\
        \bottomrule
    \end{tabular}
    \label{table:statistics}
\end{table}

\subsection{Offline RL Training and Hyperparameters} \label{appendix:hyperparam}
After collecting the game logs, we need to transfer them into the form that can be used for RL training. For each player in each game log, we extract the game history that is visible to it and the derived belief as its observation, and the selected discussion tactic as its action at each decision step. For the convenience of training, we use \texttt{text-embedding-ada-002} to convert the player's observations into state embeddings and turn its actions into indexes through a specific conversion when creating the dataset for offline RL training. The reason we adopt a frozen encoder for state embedding is that GPT's embedding model is well pretrained and we believe it could effectively represent the information of the original text in the semantic space. As for the reward, if the player wins, every decision step in this game gains 1; if loses, gains -1; and if draws, gains 0.

After creating the dataset, we use CQL \citep{kumar2020conservative}, implemented by the d3rlpy library \citep{seno2022d3rlpy}, to train the discussion policy for our RL-instructed LLM-based agent. The hyperparameters we used for CQL are listed in \cref{table:hyperparam} (if not listed, use the default values).

\begin{table}[!ht]
    \caption{Training hyperparameters for CQL.}
    \centering
    \begin{tabular}{cc}
        \toprule
        Hyper-parameters & Value \\
        \midrule
        Learning rate & 5e-5 \\
        Discount factor (\(\gamma\)) & 0.99 \\
        Mini-batch size & 32 \\
        Trade-off factor (\(\rho\)) & 4.0 \\
        Critic num & 2 \\
        Target critic update interval & 1000 \\
        Epoch num & 100 \\
        Step num per epoch & 5000 \\
        State dim & 1536 \\
        Action dim & 6 \\
        \bottomrule
    \end{tabular}
    \label{table:hyperparam}
    \vspace{-0.2cm}
\end{table}

\subsection{Game Settings in Section \ref{subsec:effectiveness}} \label{appendix:game_setting}
To evaluate the effectiveness of the discussion policy trained by RL, we select two representative game settings from the five-player ONUW game, namely \textit{easy} and \textit{hard}, according to their difficulty level evaluated by humans. Players' night actions in both settings are predefined to make sure that players on the same team (finally) adopt the same version of agents. \cref{table:easy_setting} and \cref{table:hard_setting} show the initial and final roles of all players in both settings.

\begin{table}[t]
\begin{minipage}{0.5\textwidth}
    \caption{Role changes in \textit{easy} setting.}
    \centering
    \begin{tabular}{lll}
        \toprule
        ~ & \textbf{Initial Role} & \textbf{Final Role} \\
        \midrule
        Player 1 & Troublemaker & Robber \\
        Player 2 & Werewolf & Werewolf \\
        Player 3 & Seer & Villager \\
        Player 4 & Robber & Troublemaker \\
        Player 5 & Villager & Seer \\
        \bottomrule
    \end{tabular}
    \label{table:easy_setting}
\end{minipage}
\begin{minipage}{0.5\textwidth}
    \caption{Role changes in \textit{hard} setting.}
    \centering
    \begin{tabular}{lll}
        \toprule
        ~ & \textbf{Initial Role} & \textbf{Final Role} \\
        \midrule
        Player 1 & Robber & Werewolf \\
        Player 2 & Insomniac & Seer \\
        Player 3 & Seer & Insomniac \\
        Player 4 & Werewolf & Robber \\
        Player 5 & Troublemaker & Troublemaker \\
        \bottomrule
    \end{tabular}
    \label{table:hard_setting}
\end{minipage}
\end{table}

Players' actions during the Night phase in both settings are as follows: (1) \textbf{Easy setting}: First, the Seer (Player 3) checks Player 4, knowing it is the Robber. Then, the Robber (Player 4) switches roles with Player 1, becoming the new Troublemaker. At last, the Troublemaker (Player 1) swaps the roles of Player 3 and Player 5. (2) \textbf{Hard setting}: First, the Seer (Player 3) checks Player 4, knowing it is the Werewolf. Then, the Robber (Player 1) switched roles with Player 4, becoming the new Werewolf. At last, the Troublemaker (Player 5) swaps the roles of Player 2 and Player 3. Compared to the \textit{easy} setting, the Werewolf is switched by the Robber in the \textit{hard} one. Therefore, the new Werewolf would tend to conceal its actual role while the initial Werewolf might be wrongly deduced by other players, which increases the reasoning difficulty during the game.

\subsection{Computational Consumption} \label{appendix:consumption}
The computational consumption of implementing our RL-instructed LLM-based agent mainly consists of the compute workers for training the discussion policy by offline RL and the token usage of LLMs API. The training of the discussion policy takes an NVIDIA GeForce RTX 3060 Ti GPU for about 2.5 hours. As for the token usage, taking a five-player version of the ONUW game for example, playing one game costs approximately 70,000 to 80,000 tokens, which equals about 0.8 to 1.2 dollars if using GPT-4 as the backend LLM.

\section{Human Analysis}
In this section, we present several qualitative results between our agent (\textit{RL-ins.} agent) and \textit{Belief} agent and provide corresponding human analyses on these results.

\subsection{Bluffing}
\begin{tcolorbox}[title=Current situation, breakable]
    Our agent is Player 1, who is originally the Robber. During the Night phase, it switched its role with Player 4 and became the new Werewolf.
\end{tcolorbox}

\begin{tcolorbox}[title=\textit{RL-ins.} agent, colbacktitle=my_green, breakable]
\begin{itemize}[leftmargin=5pt, itemsep=1pt]
    \item \textbf{Thought}: \textcolor{red}{My role is Werewolf now, but I would like all players to think I am the Seer. Since I know no one is Werewolf except me, I should be safe if I don't reveal myself.} But I want to know if the Insomniac exists and who the Insomniac is. So I will provide misleading evidence to induce the Insomniac to say his true action.

    \item \textbf{Discussion tactic}: deceptive evidence

    \item \textbf{Speech}: Hey everyone, I checked player2's role last night. \textcolor{red}{He is not the Werewolf.}
\end{itemize}
\end{tcolorbox}

\begin{tcolorbox}[title=\textit{Belief} agent, colbacktitle=my_orange, breakable]
\begin{itemize}[leftmargin=5pt, itemsep=1pt]
    \item \textbf{Thought}: 1. Player4 is very likely the new Robber. \textcolor{red}{Because after I switched my role with player4, I became a Werewolf. So player4 must become the new Robber.} 2. I am very likely the new Werewolf now, based on (1). 3. ... (omitted for brevity)
    
    \item \textbf{Speech}: \textcolor{red}{I switched my role with player4 in the Night phase.} So player4 is very likely the Robber now.
\end{itemize}
\end{tcolorbox}

\begin{tcolorbox}[title=Human analysis, colbacktitle=my_blue, breakable]
\begin{itemize}[leftmargin=5pt, itemsep=1pt]
    \item \textbf{Bluffing} is an advanced deceptive strategy often used in the ONUW game, involving deliberately deceiving or misleading others about players' real roles or actions. It carries a high risk of being caught, which can lead to loss of credibility. So it should be used carefully when giving deceptive information.
    
    \item In this case, Player 1 knows it is the new Werewolf so it tends to bluff empirically to conceal its actual role. Our agent (\textit{RL-ins.} agent) did bluff and chose to declare itself as the Seer, which is the most commonly declared role by our agent. Moreover, \textcolor{red}{it declared that player 2 was not a Werewolf, instead of someone else being a Werewolf, which earned it support to resist questioning from others.} However, when it comes to \textit{Belief} agent, it was more likely to state its real night action while trying to hide its new role.
\end{itemize}
\end{tcolorbox}

\subsection{Concealment}
\begin{tcolorbox}[title=Current situation, breakable]
    Our agent is Player 3, who is originally the Seer. During the Night phase, it checked Player 4, knowing it was the Robber (originally).
\end{tcolorbox}

\begin{tcolorbox}[title=\textit{RL-ins.} agent, colbacktitle=my_green, breakable]
\begin{itemize}[leftmargin=5pt, itemsep=1pt]
    \item \textbf{Thought}: I plan to mislead the players by falsely claiming that player 2 checked player 5 last night. This will potentially cause suspicion towards player 2, and may lead to a Werewolf being voted out. \textcolor{red}{Additionally, I will avoid revealing my true role as the Seer, as this could put a target on my back.}

    \item \textbf{Discussion tactic}: deceptive accusation

    \item \textbf{Speech}: \textcolor{red}{I have a suspicion that player 2 is not being truthful about his role.} I believe he may have checked player 5 last night. Let's keep an eye on player 2 and see if his actions match his words.
\end{itemize}
\end{tcolorbox}

\begin{tcolorbox}[title=\textit{Belief} agent, colbacktitle=my_orange, breakable]
\begin{itemize}[leftmargin=5pt, itemsep=1pt]
    \item \textbf{Thought}: 1. Since player1 claimed that he swapped player3 and player5's roles, if it's true, it means player3 is not the Seer anymore and player5 is not the Troublemaker anymore. 2. However, player1 is acting suspiciously because he is the one who claimed to swap player3 and player5's roles, and he is accusing player3 as the Werewolf based on wrong information, which means player1 is most likely trying to protect the real Werewolf. 3. Strategy: lure other players to reveal their actions.
    
    \item \textbf{Speech}: \textcolor{red}{I saw the Robber in the Night phase.} Player1 swapped my role with another. So I'm not the Seer now. But I'm still on Team Village. Player 1 is acting suspiciously. Can those who took action last night reveal your actions?
\end{itemize}
\end{tcolorbox}

\begin{tcolorbox}[title=Human analysis, colbacktitle=my_blue, breakable]
\begin{itemize}[leftmargin=5pt, itemsep=1pt]
    \item \textbf{Concealment} means intentionally hiding one's true role or intention, which is a fundamental skill for all players. For Werewolves, this is necessary to avoid detection; and for roles with special abilities (like Seer, Robber, and Troublemaker), it is also useful at the beginning of the discussion to avoid becoming a target of suspicion for others.
    
    \item In this case, the \textit{Belief} agent revealed its original role and shared information openly at the beginning of the game. And it claimed that it is "not the Seer now" but confidently said it is "still on Team Village", which is suspicious and potentially compromises its strategic position. On the contrary, our agent (\textit{RL-ins.} agent) took proactive control of the narrative. \textcolor{red}{Instead of committing to any particular role claim initially, it created misdirection and topic by accusing player 2, which could keep its options open.} Besides, the concealment of our agent aligns better with the game's core mechanics and the need for strategic ambiguity.
\end{itemize}
\end{tcolorbox}

\subsection{Confrontation}
\begin{tcolorbox}[title=Current situation, breakable]
    Our agent is Player 4, who is originally the Werewolf. During the Night phase, it was checked by the Seer and known it is the Werewolf, but later it was switched with the Robber, becoming the new Robber. During the Day phase, the original Seer accused Player 4 of being a Werewolf.
\end{tcolorbox}

\begin{tcolorbox}[title=\textit{RL-ins.} agent, colbacktitle=my_green, breakable]
\begin{itemize}[leftmargin=5pt, itemsep=1pt]
    \item \textbf{Thought}: I should convince other players that I am the Seer role. And I think player3 should be the Werewolf because there is no other candidate for the Werewolf role after thinking about all the claimed roles and I also claimed that player3 is the Werewolf role.

    \item \textbf{Discussion tactic}: deceptive accusation

    \item \textbf{Speech}: \textcolor{red}{I am the Seer. I checked player3's role last night and his role is the Werewolf.}
\end{itemize}
\end{tcolorbox}

\begin{tcolorbox}[title=\textit{Belief} agent, colbacktitle=my_orange, breakable]
\begin{itemize}[leftmargin=5pt, itemsep=1pt]
    \item \textbf{Thought}: I cannot confirm my Werewolf role since I do not know if I was switched by the Robber or not. I should not reveal that I am the Werewolf directly since this could make player3 and player2, who I think are the Troublemaker and the Seer, unite to confirm my role as the Werewolf. I can pretend to be innocent and try to appeal to other players for help.
    
    \item \textbf{Speech}: \textcolor{red}{I am not sure about my role. I think we should focus on finding the Werewolf.} Player3 claimed to be the Seer who checked me as the Werewolf, but I am not sure if he is telling the truth. We should not believe anyone so easily.
\end{itemize}
\end{tcolorbox}

\begin{tcolorbox}[title=Human analysis, colbacktitle=my_blue, breakable]
\begin{itemize}[leftmargin=5pt, itemsep=1pt]
    \item \textbf{Confrontation} refers to the behavior of two players or teams taking opposite actions toward a target. In the ONUW game, it is often used for players who were accused to reduce the probability of being suspected or shift conflicts. However, it also carries a high risk of being caught since other players might reveal more convincing evidence.
    
    \item In this case, after being accused, our agent (\textit{RL-ins.} agent) took an aggressive, proactive approach. Instead of being defensive when accused, it boldly claims the Seer role and makes a specific counter-accusation against Player 3. \textcolor{red}{This forces other players to choose between competing narratives and puts the original Seer on the defensive.} In contrast, the \textit{Belief} agent adopts a cautious, defensive stance by expressing uncertainty and attempting to create general doubt. While this approach is logically sound and avoids making falsifiable claims, it fails to seize narrative control and may appear evasive or suspicious to other players.
\end{itemize}
\end{tcolorbox}

\section{Game Log Examples}
\begin{tcolorbox}[breakable]
\textbf{Initial Role Assignments:}
\begin{itemize}[leftmargin=15pt, itemsep=1pt]
    \item \textit{Player 1}: Robber
    \item \textit{Player 2}: Insomniac
    \item \textit{Player 3}: Seer
    \item \textit{Player 4}: Werewolf
    \item \textit{Player 5}: Troublemaker
    \item \textit{Role Pool}: Werewolf, Villager, Villager
\end{itemize}

\textbf{Night Phase:}
\begin{itemize}[leftmargin=15pt, itemsep=1pt]
    \item Werewolf: Player 4 knew it was the only Werewolf.
    \item Seer: Player 3 checked Player 4 and knew it was a Werewolf.
    \item Robber: Player 1 switched its role with Player 4 and became the Werewolf.
    \item Troublemaker: Player 5 swapped roles between Player 2 and Player 3.
    \item Insomniac: Player 2 knew its final role was Seer.
\end{itemize}

\textbf{Role Ground Truth:}
\begin{itemize}[leftmargin=15pt, itemsep=1pt]
    \item \textit{Player 1}: Werewolf
    \item \textit{Player 2}: Seer
    \item \textit{Player 3}: Insomniac
    \item \textit{Player 4}: Robber
    \item \textit{Player 5}: Troublemaker
\end{itemize}

\textbf{Day Phase:} \\
\textbf{- Discussion Round 1}
\begin{itemize}[leftmargin=15pt, itemsep=1pt]
    \item \textit{Player 1}: I'm the Seer. I checked Player 5's role last night, and he is the Insomniac.
    \item \textit{Player 2}: I am the Insomniac, and my final role is Seer. The Seer role is only for one-time use, so the real Seer cannot use his ability to check a second time. However, Player 1 claimed to check Player 5's role, which means Player 1 cannot be the real Seer.
    \item \textit{Player 3}: I was lucky to guess that Player 4 is the Werewolf, and I would vote for Player 4.
    \item \textit{Player 4}: I admit that I played as the Troublemaker. I swapped my role to Player 1 last night. I think Player 3 has no special role. Player 1 claimed he checked Player 5's role. So I assume player 3 is either a Villager or Robber.
    \item \textit{Player 5}: I think Player 1 is the Robber. Because Troublemaker will not swap his role to the Seer, so Player 4 should be swapped to Villager. After that Player 1 could swap roles between Player 2 and Player 3 as the Robber.
\end{itemize}

\textbf{- Discussion Round 2}
\begin{itemize}[leftmargin=15pt, itemsep=1pt]
    \item \textit{Player 1}: I admit I am not the Seer. The reason I say I checked Player 5 is because I swapped my role to Player 4 in the Night phase.
    \item \textit{Player 2}: I am the Seer, and I checked Player 5's role last night. He is the Insomniac. I know this because I checked his role first, and then the Moderator told me my final role was Seer. This means that my original role must have been Robber, and I must have swapped roles with Player 1. I know that Player 1 is the Robber because he claimed to have checked Player 5's role, but this is not possible because the Seer role can only be used once. Therefore, I believe that Player 1 is the Robber, Player 4 is the Troublemaker, Player 3 is the Villager, and Player 5 is the Insomniac.
    \item \textit{Player 3}: I am a Villager. In the Night phase, I guessed Player 4 was the Werewolf just by luck. I do not have any special information that can help me to speculate other players' roles.
    \item \textit{Player 4}: I am the real Seer. I checked Player 2's role last night, and he is the Werewolf.
    \item \textit{Player 5}: I think Player 1 is the Robber. In the Night phase, Player 1 swapped his role to Player 4. Because the only role he changed is the Werewolf, and Player 4 admitted to being the Troublemaker, Player 1 now must have the Werewolf role.
\end{itemize}

\textbf{- Discussion Round 3}
\begin{itemize}[leftmargin=15pt, itemsep=1pt]
    \item \textit{Player 1}: I believe Player 2 is the real Seer. And I think Player 4 is the Villager.
    \item \textit{Player 2}: I am the Seer, and I checked Player 5's role last night. He is the Insomniac. I know this because I checked his role first, and then the Moderator told me my final role was Seer. This means that my original role must have been Robber, and I must have swapped roles with Player 1. I believe that Player 1 is the Robber, Player 4 is the Werewolf, Player 3 is the Villager, and Player 5 is the Insomniac.
    \item \textit{Player 3}: Actually, I checked Player 4 last night, and he is the Werewolf.
    \item \textit{Player 4}: I think Player 1 is the Villager. I am sure that my role is the Troublemaker. The Villager has no special role or ability, like Player 1 performed.
    \item \textit{Player 5}: I believe I am the real Troublemaker because I swapped roles between Player 2 and Player 3 last night as the Troublemaker. I still remember that. I suggest that we vote for Player 1 or Player 3. I trust Player 2 since he gave us convincing evidence that he is the Seer. And I think the Werewolf is Player 3, because he just said he guessed Player 4 was the Werewolf in the first round.
\end{itemize}

\textbf{Voting Phase:}
\begin{itemize}[leftmargin=15pt, itemsep=1pt]
    \item \textit{Player 1}: Voting for Player 5.
    \item \textit{Player 2}: Voting for Player 5.
    \item \textit{Player 3}: Voting for Player 1.
    \item \textit{Player 4}: Voting for Player 1.
    \item \textit{Player 5}: Voting for Player 3.
\end{itemize}

\textbf{Game Results:} \\
Player 1 and Player 5 are voted out and Player 1 is the Werewolf. \textit{Team Village} wins.
\end{tcolorbox}

\section{Prompt Design in the LLM-based Agent} \label{appendix:prompt}
\textbf{1. Global Prompt.} Here is the global prompt used in our method.
\begin{tcolorbox}[breakable]
    You are playing a game called the One Night Ultimate Werewolf with 4 other players. Here are the game rules: \\
    
    \#\#\ Information and roles
    
    At the beginning of the game, the candidate roles are selected and will contain 3 more than the amount of players. Each player randomly gets a role, and the remaining 3 roles will be placed in the `role pool` (which contains roles that are not assigned to players).
    
    Due to the presence of 5 players, there are a total of 8 candidate roles in the game, namely 2 Villagers, 2 Werewolves, 1 Seer, 1 Robber, 1 Troublemaker and 1 Insomniac, which means some roles may not exist among the players in some cases. Each role has a special ability. Descriptions of their abilities are as follows:
    
    - Villager: The most common role in the game. The Villager has no special abilities or information. The goal of the Villager is to find and vote out a Werewolf.
    
    - Werewolf: The Werewolf is a member on Team Werewolf. The Werewolf is allowed to check out its teammates in the Night phase. The goal of the Werewolf is to survive and to have at least one Werewolf alive (even not himself) at the end of the game.
    
    - Seer: The Seer is a member on Team Village. The Seer is allowed to check one other player's role or two roles in the `role pool` in the Night phase. The goal of the Seer is to find and vote out a Werewolf.
    
    - Robber: The Robber is a member on Team Village. The Robber is allowed to switch its role with another player's role, and then view its new role in the Night phase. The goal of the Robber is to find and vote out a Werewolf.
    
    - Troublemaker: The Troublemaker is a member on Team Village. The Troublemaker is allowed to swap roles between two other players, without looking at those cards in the Night phase. The goal of the Troublemaker is to find and vote out a Werewolf.
    
    - Insomniac: The Insomniac is a member on Team Village. The Insomniac is allowed to check its final role at the end of the Night phase. So the Insomniac is the only one who knows its role for sure during the Day and Voting phase. The goal of the Insomniac is to find and vote out a Werewolf. \\
    
    There are three phases in the game: Night, Day and Voting.
    
    1. Night phase: In this phase, several players will be called on by Moderator to take their night action according to their initial roles. But the other players did not know what actual action they took. And players with a Villager role never wake up in the Night phase.
    
    2. Day phase: After the Night phase, players discuss amongst themselves who they believe the Werewolves are. All players may say anything, but may never show their roles to anyone. Because certain roles can change other players' roles, some players will believe they are one role, while they are actually a different one.
    
    3. Voting phase: After several rounds of discussion during the Day phase, players vote for other players they believe is most likely to be a Werewolf if they think they are on Team Village. The player with the most votes dies and reveals its role. \\
    
    \#\#\ Call Order in the Night phase
    
    The Moderator calls roles to take actions or get information in the Night phase in the following order: 1. Werewolf, 2. Seer, 3. Robber, 4. Troublemaker, 5. Insomniac. \\
    
    \#\#\ Winning Conditions
    
    There are two teams in the game: Team Village and Team Werewolf.
    
    - Team Village contains Villager, Seer, Robber, Troublemaker and Insomniac;
    
    - Team Werewolf contains Werewolf.
    
    Your objective in the game depends on the team your role belongs to:
    
    Team Village wins:
    
    - If at least one Werewolf dies. Even if one or more players who are not Werewolves die in addition to a Werewolf dying, everyone on Team Village wins.
    
    - If no one is a Werewolf and no one dies. It is possible for no one to be a Werewolf if all Werewolf cards are in the center.
    
    Team Werewolf wins:
    
    - Only if at least one player is a Werewolf and no Werewolves are killed.
\end{tcolorbox}

\textbf{2. Role Description Prompt.} The role description prompt in our method is listed below (taking the Robber for example).
\begin{tcolorbox}[breakable]
    You are \{agent\_name\}, the Robber at the first place. 
    
    As a Robber, you may choose to switch roles with another player and then become the new role you switched, but you won't obtain the abilities of your new role.
    
    The player who you switched with becomes the new Robber and is also on Team Village. But your team will depend on what new role you got.
    
    If you choose not to switch with another player, you remain the Robber and you are still on Team Village.

    The Moderator will call you on during the Night phase if necessary. Do not pretend you are the Moderator.
    
    Concealing and deceiving are encouraged during the Day phase.
    
    You can choose to hide your role, or even pretend you are other roles during the discussion. But your role may be changed by other players so your actual role may be different from what you saw at the first place.
    
    You can reason other players' roles step by step.
    
    Your response should be as concise as possible and should less than 50 words.
\end{tcolorbox}

\textbf{3. Night Action Prompt.} The prompt used for deciding night action in our method is listed below (taking the Robber for example).
\begin{tcolorbox}[breakable]
    Now it is the Night phase. Notice that you are \{agent\_name\}.
    
    Based on the game rules, role descriptions and your experience, think about your acting strategy and take a proper action.

    Now it is your turn, \{agent\_name\}.
    
    Please think about your acting strategy and choose whether to switch roles with another player. If switch, please give the player you want to switch with. 
    
    You can only choose from the following options: [\{player\_names\}].
    
    You must return your response in a JSON format that can be parsed by Python `json.loads`. Here is the Response Format:
    
    \{
    
    \quad "thought": \textless your acting strategy and the reason why you act in this way \textgreater,
        
    \quad "switch": \textless `true` or `false`, whether to switch roles with another player \textgreater,
        
    \quad "player": \textless the player to switch with \textgreater
        
    \}
\end{tcolorbox}

\textbf{4. Discussion Action Prompt.} The prompt used for discussion in our method is listed below.
\begin{tcolorbox}[breakable]
    Now it is the Day phase. Here are some conversation history you can refer to: \{history\}
    
    Notice that you are \{agent\_name\} in the conversation. You should carefully analyze the conversation history since some ones might deceive during the conversation.
    
    And here is your belief about the possible roles of all players: \{current\_belief\}
    
    Based on the game rules, role descriptions, messages and your belief, think about what insights you can summarize from the conversation and your speaking strategy next. 
    
    After that, give a concise but informative and specific public speech based on your insights and strategy.

    Now it is your turn, \{agent\_name\}. [In this turn, your speaking strategy is: \{speaking\_strategy\}]
    
    Please give a concise but informative and specific public speech based on your insights summarize from the conversation and following your speaking strategy. Your speaking goal is to convince other players to believe what you are going to say and induce them to say their true actions in the Night phase.
    
    Remember, do not repeat statements after other players. And you should be cautious when deciding to reveal your thoughts (especially when you think you are Werewolf) in the public speech. Also, you should pay attention to the number of discussion rounds left while organizing your speech.
    
    You must return your response in a JSON format that can be parsed by Python `json.loads`. Here is the Response Format:
    
    \{
    
    \quad "thought": \textless the insights you summarized from the conversation and your speaking strategy \textgreater,
    
    \quad "speech": \textless your public speech content, should less than 50 words \textgreater
    
    \}
\end{tcolorbox}

\textbf{5. Voting Action Prompt.} The voting action prompt used in our method is listed below.
\begin{tcolorbox}[breakable]
    Now it is the Voting phase. Here are some conversation history you can refer to: \{history\}
    
    Notice that you are \{agent\_name\} in the conversation. You should carefully analyze the conversation history since some ones might deceive during the conversation.
    
    And here is your belief about possible roles of all players: \{current\_belief\}
    
    Based on the game rules, role descriptions, messages and your belief, think about who is most likely a Werewolf and then vote for this player.
    
    Now it is your turn, \{agent\_name\}.
    
    Please analyze current situation and vote for one player (excluding yourself) who you think is most likely a Werewolf. 
    
    You can not vote for yourself, but only vote for one other player from the following options: [\{player\_names\}].
    
    You must return your response in a JSON format that can be parsed by Python `json.loads`. Here is the Response Format:
    
    \{
    
    \quad "thought": \textless the reason why you vote for this player \textgreater,
        
    \quad "player": \textless the player you vote for \textgreater
        
    \}
\end{tcolorbox}

\textbf{6. Belief Modeling Prompt.} The prompt used for belief modeling in our method is listed below.
\begin{tcolorbox}[breakable]
    Here are some conversation history you can refer to: \{history\}
    
    Notice that you are \{agent\_name\} in the conversation. You should carefully analyze the conversation history since some ones might deceive during the conversation.
    
    Based on the game rules, role descriptions and messages, think about what roles all players (including yourself) can most probably be now.

    Now it is your turn, \{agent\_name\}.
    
    Please analyze current situation and think about what roles yourself (\{agent\_name\}) and other players (\{player\_names\}) can most probably be now. You can reason each player's role step by step, based on the real or highly credible information you know.
    
    Remember, you must give out the most likely role of each player (including yourself) in your concise response. And the number of each role has to be less or equal to the number of it in the candidate roles.
    
    Give your step-by-step thought process and your derived concise result (no more than 2 sentences) at the end with following Response Format:
    
    ```
    
    My step-by-step thought process: ...
    
    My concise result: ...
    
    ```
\end{tcolorbox}

\textbf{7. Discussion Tactics Prompts.} Here we listed the corresponding prompts to the 6 discussion tactics mentioned in our work.
\begin{tcolorbox}[breakable]
    1. (Honest Evidence): You need to provide some honest evidence or information in your public speech, and your evidence must be consistent with the information or beliefs you know. \\
    
    2. (Deceptive Evidence): You need to provide some misleading evidence or information in your public speech, and your evidence must be inconsistent with the information or beliefs you know. \\
    
    3. (Honest Accusation): You need to accuse someone has a specific role or action honestly in your public speech, and your accusation must be consistent with the information or beliefs you know. \\
    
    4. (Deceptive Accusation): You need to accuse someone has a specific role or action deceptively in your public speech, and your accusation must be misleading and inconsistent with the information or beliefs you know. \\
    
    5. (Honest Defense): You need to defend yourself or someone else against an accusation honestly, and your defense must be consistent with the information or beliefs you know. \\
    
    6. (Deceptive Defense): You need to defend yourself or someone else against an accusation deceptively, and your defense must be misleading and inconsistent with the information or beliefs you know.
\end{tcolorbox}

\section{Broader Impact Statement} \label{appendix:impact}
Our proposed RL-instructed LLM-based agent framework aims to construct an LLM-based agent with strong decision-making ability for the \textit{One Night Ultimate Werewolf} game. And we believe the idea of integrating reinforcement learning into the decision-making process of LLM-based agents can be widely applied in other tasks and scenarios, as the relative stability of RL policy could help mitigate the hallucination of LLMs. However, it raises some ethical concerns about its negative societal impacts regarding the potential misuse of this technique for deceptive communication between humans and AI in the real world.

During the development of our agent, we have carefully considered the ethical implications of our research to ensure it follows principles that minimize negative societal impacts. Firstly, our agent is mainly designed for a pure text-based environment of the ONUW game, and the prompt we use strictly limits the agents to engage in deceptive behavior only during the game. Furthermore, the experimental results indicate that our agent can detect possible deception from others' discussions to improve its win rates, so it could be further modified to prevent potential deceptive content with harmful intent. Finally, we still strongly urge users to acknowledge the inherent risks when utilizing our LLM-based agent and avoid its malicious exploitation.

\end{document}